\newtheorem{theorem}{Theorem}[section]
\newtheorem{lemma}[theorem]{Lemma}
\newtheorem{proposition}[theorem]{Proposition}
\newtheorem{definition}[theorem]{Definition}
\newtheorem{remark}[theorem]{Remark}
\crefname{definition}{definition}{definitions}
\Crefname{definition}{Definition}{Definitions}
\crefname{prop}{proposition}{propositions}
\Crefname{Prop}{Proposition}{Propositions}
\Crefname{cor}{Corollary}{Corollaries}
\crefname{lemma}{Lemma}{Lemmas}
\crefname{section}{Section}{Sections}
\crefname{subsubsubsection}{Section}{Sections}
\crefname{remark}{Remark}{Remarks}
\crefname{figure}{Figure}{Figures}
\crefname{table}{Table}{Tables}
\crefname{lemma}{lemma}{lemmas}
\Crefname{Lemma}{Lemma}{Lemmas}
\crefname{theorem}{Theorem}{Theorems}
\Crefname{theorem}{Theorem}{Theorems}
\crefname{algo}{Alg.}{Algorithms}
\crefname{assumption}{assumption}{assumptions}
\Crefname{assumption}{Assumption}{Assumptions}
\crefname{example}{example}{examples}
\Crefname{example}{Example}{Examples}
\crefname{remark}{remark}{remarks}
\Crefname{remark}{Remark}{Remarks}
\crefname{ineq}{inequality}{inequalities}
\Crefname{Ineq}{Inequality}{Inequalities}
\DeclareMathOperator*{\argmax}{arg\,max}
\newcommand{\cH}{\mathcal{H}}
\newcommand{\cX}{\mathcal{X}}
\newcommand{\cT}{\mathcal{T}}
\newcommand{\cY}{\mathcal{Y}}
\newcommand{\cG}{\mathcal{G}}
\newcommand{\cE}{\mathcal{E}}
\newcommand{\hst}{h^{\star}}
\newcommand{\ist}{i^{\star}}
\newcommand{\hy}{\widehat{y}}
\newcommand{\tildeh}{\widetilde{h}}
\newcommand{\tildeH}{\widetilde{\mathcal{H}}}
\newcommand{\regret}{\mathsf{Regret}}
\newcommand{\mistake}{\mathsf{Mistake}}
\newcommand{\optmistake}{\mathcal{M}}
\newcommand{\OPT}{\mathsf{OPT}}
\newcommand{\unif}{\mathsf{Unif}}
\newcommand{\indicator}[1]{\mathbbm{1}\!\left\{#1\right\}}
\newcommand{\Alg}{\mathcal{A}}
\newcommand{\cF}{\mathcal{F}}
\renewcommand{\dim}{\mathsf{Ldim}}
\newcommand{\sdim}{\mathsf{SLdim}}
\newcommand{\expert}{\textit{Expert}}
\newcommand{\experts}{\mathfrak{E}}
\newcommand{\suchthat}{\text{ s.t. }}
\newcommand{\squishlist}{
 \begin{list}{$\bullet$}
  { \setlength{\itemsep}{0pt}
     \setlength{\parsep}{3pt}
     \setlength{\topsep}{3pt}
     \setlength{\partopsep}{0pt}
     \setlength{\leftmargin}{1.5em}
     \setlength{\labelwidth}{1em}
     \setlength{\labelsep}{0.5em} } }
\newcommand{\squishlisttwo}{
 \begin{list}{$\bullet$}
  { \setlength{\itemsep}{0pt}
    \setlength{\parsep}{0pt}
    \setlength{	opsep}{0pt}
    \setlength{\partopsep}{0pt}
    \setlength{\leftmargin}{2em}
    \setlength{\labelwidth}{1.5em}
    \setlength{\labelsep}{0.5em} } }
\newcommand{\squishend}{
  \end{list}  }
\newcommand{\BR}{\mathsf{BR}}
\newcommand{\br}{\mathsf{br}}
\newcommand{\SOA}{\mathsf{SOA}}
\newcommand{\Deltain}{\Delta_G^-}
\newcommand{\Deltaout}{\Delta_G^+}
\newcommand{\StrategicSOA}{\mathsf{SSOA}}
\newcommand{\OPTexperts}{\OPT^{\experts}}
\newcommand{\optexpert}{\mathfrak{e}^\star}
\newcommand{\SSOAexpert}{\StrategicSOA^{\mathfrak{e}_h}}
\newcommand{\SSOAh}{\StrategicSOA^{h}}
\newcommand{\Gstar}{G^\star}
\newcommand{\Nstar}{N^\star}
\newcommand{\OPTH}{\OPT_{\cH}}
\newcommand{\OPTG}{\OPT_{\cG}}
\newcommand{\Gunion}{G_{\text{union}}}
\newcommand{\Deltaunion}{\Delta_{\cG}^+}
\newcommand{\Deltaunionin}{\Delta_{\cG}^-}
\newcommand{\Deltamax}{\Delta_{\cG}^-}
\newcommand{\eexpert}{\mathfrak{e}}
\newcommand{\determ}{\text{det}}
\newcommand{\random}{\text{rand}}
\begin{document}
\title{Strategic Littlestone Dimension: \\Improved Bounds on Online Strategic Classification\footnote{Authors are ordered alphabetically.}}
\author[$\dagger$]{Saba Ahmadi}
\author[$\ddagger$]{Kunhe Yang}
\author[$\mathsection$]{Hanrui Zhang}

\affil[$\dagger$]{Toyota Technological Institute at Chicago, \texttt{saba@ttic.edu}}
\affil[$\ddagger$]{University of California, Berkeley, \texttt{kunheyang@berkeley.edu}}
\affil[$\mathsection$]{Chinese University of Hong Kong, \texttt{hanrui@cse.cuhk.edu.hk}}
\date{}

\maketitle

\allowdisplaybreaks
\begin{abstract}
  We study the problem of online binary classification in settings where strategic agents can modify their observable features to receive a positive classification. We model the set of feasible manipulations by a directed graph over the feature space, and assume the learner only observes the manipulated features instead of the original ones. We introduce the \emph{Strategic Littlestone Dimension}, a new combinatorial measure that captures the joint complexity of the hypothesis class and the manipulation graph. We demonstrate that it characterizes the instance-optimal mistake bounds for deterministic learning algorithms in the realizable setting. We also achieve improved regret in the agnostic setting by a refined agnostic-to-realizable reduction that accounts for the additional challenge of not observing agents' original features. Finally, we relax the assumption that the learner knows the manipulation graph, instead assuming their knowledge is captured by a family of graphs.
  We derive regret bounds in both the realizable setting where all agents manipulate according to the same graph within the graph family, and the agnostic setting where the manipulation graphs are chosen adversarially and not consistently modeled by a single graph in the family.
\end{abstract}

\section{Introduction}

Strategic considerations in machine learning have gained significant attention during the past decades.
When ML algorithms are used to assist decisions that affect a strategic entity (e.g., a person, a company, or an LLM agent), this entity --- henceforth the {\em agent} --- naturally attempts to game the ML algorithms into making decisions that better serve the agent's goals, which in many cases differ from the decision maker's.
Examples include loan applicants optimizing their credit score without actually improving their financial situation.\footnote{
    The following article (among others) discusses some well-known tricks for this: \url{https://www.nerdwallet.com/article/finance/raise-credit-score-fast}.
}
It is therefore desirable, if not imperative, that the ML algorithms used for decision-making be robust against such strategic manipulation.

Indeed, extensive effort has been made towards designing ML algorithms in the presence of strategic behavior, shaping the research area of {\em strategic machine learning} \citep{bruckner2011stackelberg,hardt2016strategic}.
In particular, powerful frameworks have been proposed for {\em offline} environments, where the decision maker has access to historical data, on which they train a model that is subsequently used to make decisions about (i.e., to {\em classify}) members of a static population.
These frameworks provide almost optimal learnability results and sample complexity bounds for strategic machine learning in offline environments, which gracefully generalize their non-strategic counterparts (see, e.g., \citep{zhang2021incentive,sundaram2023pac}).

However, the situation becomes subtler in {\em online} environments, where the decision maker has little or no prior knowledge about the population being classified, and must constantly adjust the decision-making policy (i.e., the {\em classifier}) through trial and error.
This is particularly challenging in the presence of strategic behavior, because often the decision maker can only observe the agent's features {\em after manipulation}.
In such online environments, the performance of a learning algorithm is often measured by its {\em regret}, i.e., how many more mistakes it makes compared to the best classifier within a certain family {\em in hindsight}.
For online strategic classification, while progress has been made in understanding the optimal regret in several important special cases, a full {instance-wise} characterization has been missing, even in the seemingly basic realizable setting (meaning that there always exists a perfect classifier in hindsight).
This salient gap is the starting point of our investigation in this paper --- which turns out to reach quite a bit beyond the gap itself.

Following prior work \citep{ahmadi2023fundamental,cohen2024learnability}, we study the following standard and general model of online strategic classification: we have a (possibly infinite) feature space, equipped with a manipulation graph defined over it.
An edge between two feature vectors $x_1$ and $x_2$ means that an agent whose true features are $x_1$ can pretend to have features $x_2$, and in fact, the agent would have incentives to do so if the label assigned to $x_2$ by the classifier is better than that assigned to the true features $x_1$.
At each time step, the decision maker commits to a classifier (which may depend on observations from previous interactions), and an agent arrives and observes the classifier.
The agent then responds to the classifier by reporting (possibly nontruthfully) a feature vector that leads to the most desirable label subject to the manipulation graph, i.e., the reported feature vector must be a neighbor of the agent's true feature vector.
The decision maker then observes the reported feature vector, as well as whether the label assigned to that feature vector matches the agent's true label.

\subsection{Our Results and Techniques}

\paragraph{An instance-optimal regret bound through the strategic Littlestone dimension.}
Our first main finding is an instance-optimal regret bound for online strategic classification in the realizable setting, when randomization is not allowed (we will discuss the role of randomization in \Cref{sec:discussion}).
In this setting, there is a predefined hypothesis class of classifiers, in which there must exist one classifier that assigns all agents their true labels under manipulation.
The decision maker, knowing that a perfect classifier exists in this class, tries to learn it on the fly while making as few mistakes as possible in the process.
Naturally, the richer this hypothesis class is, the harder the decision maker's task will be (e.g., one extreme is when the hypothesis class contains only one classifier, and the decision maker knows a priori that that classifier must be perfectly correct, and the optimal regret is $0$).
Thus, the optimal regret must depend on the richness of the hypothesis class.
Similarly, one can imagine that the optimal regret must also depend on the manipulation graph.

Previous work \citep{ahmadi2023fundamental,cohen2024learnability} has established regret bounds for this setting based on various complexity measures of the hypothesis class and the manipulation graph, including the size and the (classical) Littlestone dimension \citep{littlestone1988learning} of the hypothesis class, as well as the maximum out-degree of the manipulation graph.
However, the optimality (when applicable) of these bounds only holds under the respective particular parametrization.
In particular, there exist problem instances that are learnable where all these parameters are infinite.

To address the above issue, we introduce a new combinatorial complexity measure that generalizes the classical Littlestone dimension into strategic settings.
Conceptually, the new notion also builds on the idea of ``shattered trees'', which has proved extremely useful in classical settings.
However, the asymmetry introduced by strategic behavior\footnote{
    E.g., unlike in classical (non-strategic) settings, a true positive does not carry the same information as a false negative in our setting.
}
demands a much more delicate construction of shattered trees (among other intriguing implications to be discussed in \Cref{sec:realizable}).
We show that the generalized Littlestone dimension captures precisely the optimal regret of any deterministic learning algorithm given a particular hypothesis class and a manipulation graph, thereby providing a complete characterization of learnability in this setting.
Being instance-optimal, our bound strengthens and unifies all previous bounds for online strategic classification in the realizable setting.

\paragraph{An improved regret bound for the agnostic case.}
We then proceed to the agnostic setting, where no hypothesis necessarily assigns correct labels to all agents. The regret is defined with respect to the best hypothesis in hindsight. 
Compared to the classical (i.e., non-strategic) setting, the main challenge is incomplete information: since the learner cannot observe original features, upon observing the behavior of an agent under one classifier, it is not always possible to counterfactually infer what would have been observed if the learner used another classifier.

To understand why this can be a major obstacle, recall some high-level ideas behind the algorithms for classical agnostic online classification (see, e.g., \citep{ben2009agnostic}).
The key is
to construct a finite set of ``representative'' experts out of the potentially infinite set of hypotheses, such that the best expert performs almost as well as the best hypothesis in hindsight.
An agnostic learner then runs a no-regret learning algorithm (such as multiplicative weights) on the expert set, which in the long run matches the performance of the best expert, and in turn of the best hypothesis.

The partial information challenge appears in both steps of the above approach. First, to construct the set of representative challenges, the learner needs to simulate the observation received by each hypothesis, had that hypothesis been used to label the strategic agents.
Second, the no-regret algorithm on the expert set also needs to counterfacurally infer the agent's response to each expert.
To circumvent both issues, we 
design a nuanced construction of the representative set of experts that effectively ``guesses'' each potential direction of the agent's manipulation. We then run the biased voting approach introduced in \citep{ahmadi2023fundamental} on the finite set of experts, which enjoys regret guarantees in the strategic setting even with partial information.
Combined with our regret bound for the realizable setting, this approach yields an improved bound for the agnostic setting.

\paragraph{Learning with unknown manipulation graphs.}

Our last result focuses on relaxing the assumption that the learner has perfect knowledge about the manipulation graph structure.
Instead, following previous works~\citep{lechner2023strategic,cohen2024learnability}, we model their knowledge about the manipulation graph using a pre-defined graph class, which to some degree reflects the true set of feasible manipulations. 
In this setting, our work is the first that provides positive results when the learner only observes features after manipulation.
We start with the realizable setting where the manipulation graphs are consistently modeled by the same (unknown) graph in the class. In this setting, we provide a more careful construction of the representative experts to account for the additional challenge of unknown graphs. Combing this construction with re-examining the effectiveness of the biased voting approach~\citep{ahmadi2023fundamental} when the input graph is a overly-conservative estimate of the true graph, we obtain the first regret bound in this setting that is approximately optimal (up to logarithmic factors) in certain instances. We also extend our results to fully agnostic settings where the agents in each round manipulates according to a potentially different graph, and the best graph in the class has nonzero error in modeling all the manipulations.

\subsection{Further Related Work}

There is a growing line of research that studies learning from data provided by strategic agents~\citep{dalvi2004adversarial,dekel2008incentive,bruckner2011stackelberg}. The seminal work of~\cite{hardt2016strategic} introduced the problem of \emph{strategic classification} as a repeated game between a mechanism designer that deploys a classifier and an agent that best responds to the classifier by modifying their features at a cost. Follow-up work studied different variations of this model, in an online learning setting~\citep{dong2018strategic,chen2020learning,ahmadi2021strategic}, incentivizing agents to take improvement actions rather than gaming actions~\citep{kleinberg2020classifiers,haghtalab-ijcai2020,Alon2020MultiagentEM,ahmadi2022classification}, causal learning~\citep{bechavod2021gaming,pmlr-v119-perdomo20a}, screening processes~\citep{cohen2023sequential},
fairness~\citep{10.1145/3287560.3287597}, etc.

Two different models for capturing the set of plausible manipulations have been considered in the literature. The first one is a geometric model, where the agent's best-response to the mechanism designer's deployed classifier is a state within a bounded distance (with respect to some $\ell_p$ norm) from the original state, i.e. feature set. In the second model, introduced by~\cite{zhang2021incentive} there is a combinatorial structure, i.e. manipulation graph, that captures the agent's set of plausible manipulations. This model has been studied in both offline PAC learning~\citep{lechner2022learning,zhang2021incentive,lechner2023strategic} and online settings~\citep{ahmadi2023fundamental}. In a recent work,~\citet{lechner2023strategic} consider this problem in an offline setting where the underlying manipulation is unknown and belongs to a known family of graphs. Our work improves the results given by~\citep{ahmadi2023fundamental} in the online setting and also extends their results to the setting where the underlying manipulation is unknown and belongs to a known family of graphs. 

Our work is also closely related to that of~\cite{cohen2024learnability}, with two main points of distinction. First, in the realizable setting, their bound is shown to be optimal for a specific instance, whereas our bound is instance-wise optimal. 
Second, in both the agnostic and the unknown graph settings, they assume that agents' original features are observable before the learner makes decisions, whereas our algorithm only requires access to post-manipulation features. 

Finally, our work is also tangentially connected to several recent advances in understanding multi-class classification under bandit feedback, e.g., \citep{raman2024multiclass,filmus2024bandit}, as the false-positive mistake types can be treated as multiple labels at a very abstract level. However, an additional challenge in the strategic setting is that the learner needs to choose a classifier without observing the original instance to be labeled.

\section{Model and Preliminaries}
\label{sec:prelim}
\subsection{Strategic classification.}
Let $\cX$ be a discrete space of feature vectors, $\cY=\{-1, 1\}$ be the binary label space, and $\cH:{\cX}\to\cY$ be a hypothesis class that is known to the learner. In the strategic classification setting, agents may manipulate their features within a predefined range to achieve a positive classification. We use the \emph{manipulation graphs} introduced by \citep{zhang2021incentive,lechner2022learning} to model the set of feasible manipulations. 
The manipulation graph $G(\cX,\cE)$ is a directed graph in which each node corresponds to a feature vector in $\cX$, and each edge $(x_1,x_2)\in\cE\subseteq\cX^2$ represents that an agent with initial feature vector $x_1$ can modify their feature vector to $x_2$.
For each $x\in\cX$, we use $N_G^+(x)$ to denote the set of out-neighbors of $x$ in $G$, excluding $x$ itself, and {$N^+_G[x]$} to denote the out-neighbors including $x$. Formally, $N^+_G(x)=\{x'\in\cX\setminus\{x\}\mid (x,x')\in\cE\}$ and $N^+_G[x]=\{x\}\cup N^+_G(x)$. 
Similarly, we use $N^-_G(x)$ and $N^-_G[x]$ to denote respectively the exclusive and inclusive in-neighborhood of $x$.

\paragraph{Manipulation rule.} Given a manipulation graph $G$, an agent with initial features $x\in\cX$ responds to a classifier $h\in\cY^{\cX}$ by moving to a node in the inclusive neighborhood $N_G^+[x]$ that is labeled as positive by $h$, if such a node exists. 
Formally, the set of best response features from $x$, denoted by $\BR_{G,h}(x)$, is defined as
\begin{align*}
    \BR_{G,h}(x)\triangleq N^+_G[x]\cap \{x\mid h(x)=+1\}.
\end{align*} 
If, however, the set $\BR_{G,h}(x)$ is empty --- indicating that the entire out-neighborhood $N_G^+[x]$ is labeled as negative by $h$ --- then the agent will not manipulate their features and remain at $x$.
In addition, when there are multiple nodes in $\BR_{G,h}(x)$, the agent may select any node arbitrarily. We do not require the selection to be consistent across rounds or to follow a pre-specified rule. Our results specifically focus on the learner's mistakes against worst-case (adversarial) selections by the agent.\footnote{Some previous works, such as \citep{ahmadi2023fundamental,cohen2024learnability}, assume that agents will not manipulate their features if $x\in\BR_{G,h}(x)$. While this assumption enforces consistency in the special case of $h(x)=1$, we remark that most algorithms proposed by both papers remain effective even without this assumption.}
Finally, the manipulated feature vector is denoted by $\br_{G,h}(x)$.

The labels induced by the manipulated feature vectors $\br_{G,h}(x)$ are captured by \emph{effective classifiers} $\tildeh_G$, formally defined as
\begin{align*}
    \tildeh_G(x)\triangleq h(\br_{G,h}(x))=\begin{cases}
        +1,&\text{if }\exists v\in N_G^+[x],\text{ s.t. }h(v)=+1;\\
        -1,&\text{otherwise}.
    \end{cases}
\end{align*}

\paragraph{Online learning.}
We consider an \emph{online} strategic classification setting modeled as a repeated game between the learner (aka the decision maker) and an adversary over $T$ rounds, where the learner make decisions according to an online learning algorithm $\Alg$. At each round $t\in[T]$, the learner first commits to the classifier $h_t\in\cY^{\cX}$ (not necessarily restricted to $\cH$) that is generated by $\Alg$. The adversary then selects an agent $(x_t,y_t)$ where $x_t\in\cX$ is the original feature vector and $y_t\in\cY$ is the true label. In response to $h_t$, the agent manipulates their features from $x_t$ to $v_t=\br_{G,h}(x_t)$. Consequently, the learner observes the manipulated features $v_t$ (instead of $x_t$), and incurs a mistake if $y_t\neq h_t(v_t)$. We use $S=(x_t,y_t)_{t\in[T]}$ to denote the sequence of agents.

The learner aims to minimize the Stackelberg regret on $S$ with respect to the optimal hypothesis $\hst\in \cH$ had the agents responded to $\hst$:
\[\regret_{\Alg}(S,\cH,G)\triangleq \sum_{t=1}^T \indicator{h_t(\br_{G,h_t}(x_t))\neq y_t}-\min_{\hst\in\cH} \sum_{t=1}^T \indicator{\hst(\br_{G,\hst}(x_t))\neq y_t}.\]

We call a sequence $S$ \emph{realizable} with respect to $\cH$ if the optimal-in-hindsight hypothesis $\hst\in\cH$ achieves zero mistakes on $S$. Specifically, this means that for all $(x_t,y_t)$ in the sequence $S$, we have $y_t=\hst(\br_{G,\hst}(x_t))=\tildeh^\star_G(x_t)$. In such cases, the learner's regret coincides with the number of mistakes made. We use $\mistake_{\Alg}(\cH,G)$ to denote the \emph{maximal number of mistakes} that $\Alg$ makes against any realizable sequence with respect to class $\cH$ and graph $G$. A deterministic algorithm is called \emph{min-max optimal} or \emph{instance-optimal} if achieves the minimal $\mistake_{\Alg}(\cH,G)$ across all deterministic algorithms\footnote{In this paper, we mainly consider deterministic algorithms. We will discuss the role of randomness in \Cref{sec:discussion} and \Cref{app:randomness}.}. 
We denote this optimal mistake bound by $\optmistake(\cH,G)\triangleq\inf_{\Alg \text{ deterministic}}\mistake_{\Alg}(\cH,G)$.

\subsection{Classical Littlestone Dimension}
In this section, we revisit the classical online binary classification 
setting where the agents are unable to strategically manipulate their features. 
This setting can be viewed as a special case of strategic classification where the manipulation graph $G$ consists solely of isolated nodes. 
We will introduce the characterization of the optimal mistake in this classical setting --- known as the \emph{Littlestone Dimension} --- which inspires our analysis in the strategic setting.
\begin{definition}[$\cH$-Shattered Littlestone Tree]
    A Littlestone tree shattered by hypothesis class $\cH$ of depth $d$ is a binary tree where: 
    \squishlist
        \item (Structure) Nodes are labeled by $\cX$ and each non-leaf node has exactly two outgoing edges that are labeled by $+1$ and $-1$, respectively.
        \item (Consistency) For every root-to-leaf path $x_1 \xrightarrow{y_1} x_2 \xrightarrow{y_2}\cdots x_{d} \xrightarrow{y_{d}} x_{d+1}$ where $x_1$ is the root node and each $y_t$ is the edge connecting $x_t$ and $x_{t+1}$, there exists a hypothesis $h\in\cH$ that is consistent with the entire path, i.e., $\forall t\le d,\ h(x_t)=y_t$.
    \squishend
\end{definition}

The above tree structure intuitively models an adversary's strategy to maximize the learner's mistakes, where each node $x_t$ represents the unlabeled instance to be presented to the learner, and $y_t$ represents the type of mistake (either a false positive or a false negative) that the adversary aims to induce.
For example, if the learner predicts the label of $x_t$ to be $\hy_t=+1$, then the adversary will declare $y_t=-1$, enforce a false positive mistake, and choose the next instance $x_{t+1}$ as the children of the current node along the $-1$ edge.
In addition, the \emph{consistency} requirement guarantees that the resulting input sequence is realizable by some classifier in $\cH$.

\begin{definition}[Littlestone Dimension]
    The Littlestone dimension of class $\cH$, denoted as $\dim(\cH)$, is
    the maximum integer $d$ such that there exists an $\cH$-shattered Littlestone of depth $d$.
\end{definition}

The interpretation of the true structure immediately implies that the mistake of any algorithm should be lower bounded by $\dim(\cH)$. Moreover, a seminal result by \citet{littlestone1988learning} also showed that an online learning algorithm known as the Standard Optimal Algorithm ($\SOA$, see \Cref{algo:classical-SOA} in \Cref{app:SOA}) can achieve this lower bound. Together, they form a complete characterization of the optimal mistake bound in the classical setting, which we summarize in the following proposition.

\begin{proposition}[Optimal Mistake Bound~\citep{littlestone1988learning}] Let $\optmistake(\cH)$ be the optimal mistake in the classical online learning setting, then 
    $\optmistake(\cH)=\dim(\cH)$.
\end{proposition}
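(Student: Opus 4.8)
The plan is to prove the two matching inequalities $\optmistake(\cH)\ge\dim(\cH)$ and $\optmistake(\cH)\le\dim(\cH)$ separately, treating the lower bound as an adversary argument and the upper bound as a performance guarantee for the Standard Optimal Algorithm. Throughout, recall that in the classical setting the manipulation graph consists of isolated nodes, so $\br_{G,h}(x)=x$ and the model reduces to ordinary online binary classification.

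For the lower bound, I would fix an $\cH$-shattered Littlestone tree of depth $d=\dim(\cH)$, which exists by definition, and use it as a playbook for the adversary against an arbitrary deterministic algorithm $\Alg$. Starting at the root $x_1$, whenever $\Alg$ predicts $\hy_t$ on the current node $x_t$, the adversary declares the true label $y_t=-\hy_t$, forcing a mistake, and advances to the child of $x_t$ along the edge labeled $y_t$. After $d$ rounds this traces a single root-to-leaf path on which $\Alg$ errs every time, and the consistency property supplies a single $h\in\cH$ realizing the entire labeled path, so the constructed sequence is realizable. Hence $\mistake_{\Alg}(\cH)\ge d$ for every deterministic $\Alg$, and taking the infimum over algorithms gives $\optmistake(\cH)\ge\dim(\cH)$.

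For the upper bound, I would analyze the $\SOA$, which maintains a version space $V\subseteq\cH$ of hypotheses consistent with the labels seen so far (initialized to $\cH$) and, on instance $x_t$, predicts the label $\hy_t\in\{-1,+1\}$ maximizing $\dim(V^{y})$, where $V^{y}\triangleq\{h\in V:h(x_t)=y\}$; upon observing the true label it replaces $V$ by $V^{y_t}$. The crux is a potential argument showing that $\dim(V)$ strictly decreases on every mistake. This rests on the key lemma that for any class $V$ and any point $x$ one cannot have $\dim(V^{+1})=\dim(V^{-1})=\dim(V)$: if both subclasses admitted shattered trees of depth $\dim(V)$, gluing them as the two subtrees below a fresh root labeled $x$ would exhibit an $\cH$-shattered tree of depth $\dim(V)+1$, contradicting maximality. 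Since $\SOA$ predicts the label attaining the larger dimension, on a mistake the surviving class is $V^{-\hy_t}$, which satisfies $\dim(V^{-\hy_t})\le\dim(V^{\hy_t})$ and, combined with the lemma, $\dim(V^{-\hy_t})\le\dim(V)-1$.

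I expect the tree-gluing lemma to be the main obstacle, since it is the one place where the combinatorial structure of the Littlestone dimension is genuinely exploited; everything else is bookkeeping. With that lemma in hand, the quantity $\dim(V)$ is a nonnegative integer starting at $\dim(\cH)$ that drops by at least one per mistake, so $\SOA$ makes at most $\dim(\cH)$ mistakes on any sequence, in particular on any realizable one. This yields $\optmistake(\cH)\le\dim(\cH)$, and combining the two inequalities closes the characterization.
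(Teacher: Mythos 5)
Your proof is correct and is exactly the standard argument: the paper itself does not reprove this proposition (it is cited from Littlestone), but your two-part structure --- an adversary walking a shattered tree for the lower bound, and a potential argument on $\dim$ of the version space for $\SOA$ with the tree-gluing lemma for the upper bound --- is precisely the template the paper generalizes in its proofs of \Cref{thm:main-lower-bound} and \Cref{thm:main-upper-bound}. No gaps; the gluing lemma you flag as the crux is indeed the step the paper's strategic analogue must rework (there the subtrees attached below the new root are indexed by all mistake types in $N_G^+[x]$, not just two labels).
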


In the next section, we will discuss the challenges of extending Littlestone's characterization to the strategic setting, and present our solution.

\section{The Strategic Littlestone Dimension}
\label{sec:realizable}
In this section, we will introduce a new combinatorial dimension called the \emph{Strategic Littlestone Dimension}, and show that it characterizes the minmax optimal mistake bound for strategic classification.

Inspired by the classical Littlestone dimension, we hope to use a tree structure to model an adversary's strategy for selecting agents $(x_t,y_t)$, where nodes serve as (proxies of) the initial feature vector of each agent, and edges represent the types of mistakes that the adversary can induce.
However, since agents can strategically manipulate their features, the potential mistakes associated with the same initial feature vector could manifest in many more types depending on the learner's choice of classifiers.
Specifically, let $x$ be the initial feature vector.
Then 
a mistake associated with $x$ might be observed as a false negative mistake at node $x$ (denoted as $(x,+1)$ where $x$ is the observable node and $+1$ is the true label), or as a false positive mistake at any outgoing neighbor $v$ of $x$ (denoted as $(v,-1)$ accordingly).
Therefore, an adversary's strategy should accommodate all such possibilities, which necessitates the strategic Littlestone tree to contain branches representing all potential mistake types.

Another challenge is caused by the mismatch of the information available to the learner and the adversary. Since the learner only observes manipulated features instead of the true ones, the amounts of information carried by false positive and false negative mistakes are inherently asymmetric. False negatives provide full-information feedback as the manipulated and original features are identical. However, false positives introduce uncertainty about the original features, which could potentially be any in-neighbor of the observed one. 
As a result, a hypothesis is deemed ``consistent'' with a false positive observation as long as it can correctly label any one of the potential original nodes.

Now we formally introduce the \emph{strategic Littlestone tree} with adapted branching and consistency rules tailored for strategic classification. See \Cref{fig:tree} for a pictorial illustration.

\begin{definition}[$\cH$-Shattered Strategic Littlestone Tree]
    \label[definition]{def:strategic-tree}
    A Strategic Littlestone tree for hypothesis class $\cH$ under graph $G$ with depth $d$ is a tree where:
    \squishlist
        \item (Structure) Nodes are labeled by $\cX$. The set of outgoing edges from each non-leaf node $x$ are: one false negative edge $(x,+1)$, and a set of false positive edges $\{(v,-1)\mid v\in N_G^+[x]\}$.
        \item (Consistency) For every root-to-leaf path $x_1' \xrightarrow{(v_1,y_1)} x_2' \xrightarrow{(v_2,y_2)}\cdots x_{d}' \xrightarrow{(v_{d},y_{d})} x_{d+1}'$ where $x_1'$ is the root node and $(v_t,y_t)\in\cX\times\{\pm1\}$ is the edge that connects $x_t'$ and $x_{t+1}'$, there exists a hypothesis $h\in\cH$ that is consistent with the entire path. Specifically, $\forall t \le d$, $\exists x_t$ s.t. $\tildeh_G(x_t)=y_t$, where $x_t$ satisfies $x_t=v_t$ if $y_t=+1$, and $x_t\in N_G^{-1}[v_t]$ if $y_t=-1$ .
    \squishend
\end{definition}
  
\begin{definition}[Strategic Littlestone Dimension]
    The Strategic Littlestone Dimension of a hypothesis class $\cH$ under graph $G$, denoted with $\sdim(\cH,G)$, is defined as the largest nonnegative integer $d$ for which there exists a Strategic Littlestone tree of depth $d$ shattered by $\cH$ under graph $G$.
\end{definition}

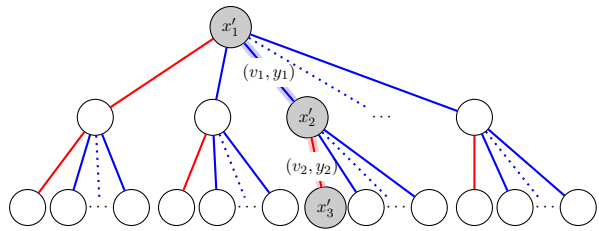
\begin{figure}[htbp]
    \centering
    \begin{tikzpicture}[scale=0.6, transform shape]
        \def \layerwidth {2cm}
        \def \sep {2.1}
        \node[draw, circle, fill=black!20] at (0,0) (root) {$x_1'$};
        \node[draw, circle] at (-3,-\layerwidth) (node10) {\phantom{$x_1$}};
        \draw[thick,red] (root)--(node10);
       \node[draw, circle] at (-2.5+\sep*1, -\layerwidth) (node11) {$\phantom{x_1}$};
      \draw[thick,blue] (root)--(node11);
      \node[draw, circle, fill=black!20] at (-2.5+\sep*2, -\layerwidth) (node12) {$x_2'$};
      \draw[line width=1.2mm,blue!20] (root)--(node12);
      \draw[thick,blue] (root)--(node12);
      \node[sloped,anchor=center,fill=white] at (-1.25+0.5*\sep*2, -0.5*\layerwidth) {${(v_1,y_1)}$ };
        \node[sloped,anchor=center,fill=white] at (-2.5+\sep*2.8,-\layerwidth) (node1dots) {$\cdots$};
        \draw[thick,blue,dotted] (root)--(node1dots);
        \node[draw, circle] at (-3+4*\sep, -\layerwidth) (node13) {$\phantom{x_1}$};
        \draw[thick,blue] (root) -- (node13);
        \foreach \j in {0,1,3}{
            \node[draw,circle] at (-4.5+3.3*\j,-2*\layerwidth) (node21\j) {\phantom{$x_2$}};
            \draw[thick,red] (node1\j)--(node21\j);
            \foreach \i in {1,3}{
              \node[draw,circle] at (-4.3+3.3*\j+0.7*\i,-2*\layerwidth) (node2\i) {\phantom{$x_2$}};
            \draw[thick,blue] (node1\j)--(node2\i);
            }
            \node[sloped,anchor=center] at (-4.3+0.7*2+3.3*\j,-2*\layerwidth) (node2d\j) {$\cdots$};
            \draw[thick,blue,dotted] (node1\j)--(node2d\j);
        }
        \node[draw,circle,fill=black!20] at (-4.5+3.3*2,-2*\layerwidth) (node21r) {$x_3'$};
        \draw[line width=1.2mm,red!20] (node12)--(node21r);
            \draw[thick,red] (node12)--(node21r);
            \node[sloped,anchor=center,fill=white] at (1.8, -1.55*\layerwidth) {${(v_2,y_2)}$};
            \foreach \i in {1,3}{
              \node[draw,circle] at (-4.3+3.3*2+0.7*\i,-2*\layerwidth) (node2s\i) {\phantom{$x_2$}};
            \draw[thick,blue] (node12)--(node2s\i);
            }
            \node[sloped,anchor=center] at (-4.3+0.7*2+3.3*2,-2*\layerwidth) (node2s) {$\cdots$};
            \draw[thick,blue,dotted] (node12)--(node2s);
    \end{tikzpicture}
    \caption{\footnotesize{A Strategic Littlestone Tree with depth $2$. False negative edges are marked red, whereas false positive edges are marked blue. The highlighted path $x_1' \xrightarrow{(v_1,y_1)} x_2' \xrightarrow{(v_2,y_2)}x_{3}'$ is an example root-to-leaf path.
    In this path, the first observation $(v_1,y_1)$ is a false positive, which satisfies $v_1\in N_G^+[x_1']$ and $y_1=-1$; the second observation $(v_2,y_2)$ is a false negative, which satisfies $v_2=x_2'$ and $y_2=+1$.
    } }
    \label[figure]{fig:tree}
\end{figure}

\begin{theorem}[Minmax optimal mistake for strategic classification]
    \label{thm:main-sldim}
    For any hypotheses class $\cH$ and manipulation graph $G$, the minmax optimal mistake in the realizable setting is captured by the strategic Littlestone dimension, i.e., 
    $\optmistake(\cH,G)=\sdim(\cH,G)$.
\end{theorem}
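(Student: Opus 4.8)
The plan is to prove the two inequalities $\optmistake(\cH,G)\ge\sdim(\cH,G)$ and $\optmistake(\cH,G)\le\sdim(\cH,G)$ separately, mirroring the classical equivalence between the Littlestone dimension and the optimal mistake bound but tracking the strategic asymmetry throughout. For a subclass $V\subseteq\cH$ and a feature $v\in\cX$ I will work with the two restricted classes that encode the version-space update after each mistake type: the false-negative class $A_V(v)=\{h\in V:\tildeh_G(v)=+1\}$ (full information, since a false negative reveals the original feature $v$) and the false-positive class $B_V(v)=\{h\in V:\exists\,x\in N_G^-[v],\ \tildeh_G(x)=-1\}$ (partial information, since the learner only knows the original feature lies in $N_G^-[v]$). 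By \Cref{def:strategic-tree}, the subtree hanging off the false-negative edge of a node $v$ is shattered by $A_V(v)$ and the subtree hanging off the false-positive edge $(u,-1)$ is shattered by $B_V(u)$, so these classes are exactly the objects on which the recursion for $\sdim$ acts.

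For the lower bound I fix an arbitrary deterministic algorithm and let the adversary descend a depth-$d$ shattered tree, $d=\sdim(\cH,G)$. At the current node $v$ the learner has committed some classifier $h_t$; if its effective classifier satisfies $\tildeh_{t,G}(v)=-1$ the adversary forces a false negative (true label $+1$; the agent cannot reach a positive vertex, so the observed feature is $v$ itself) and follows the edge $(v,+1)$, while if $\tildeh_{t,G}(v)=+1$ it forces a false positive by steering the agent to a positively labeled out-neighbor and follows the corresponding edge $(\cdot,-1)$. Every step is a mistake, so the descent produces $d$ mistakes. The apparent circularity---that the original feature witnessing a false-positive edge depends on the realizing hypothesis, which is fixed only once the whole path is known---is resolved by the partial-information structure: since the learner never observes original features, the adversary's observable behavior depends only on the edge labels, so the original features may be assigned retroactively to the consistency witnesses of the completed root-to-leaf path. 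Invoking the consistency clause of \Cref{def:strategic-tree} then yields a single $\hst$ realizing the entire sequence, giving $\optmistake(\cH,G)\ge d$.

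For the upper bound I construct a strategic analogue of $\SOA$ maintaining the version space $V_t$ of hypotheses consistent with all observations so far. When $\sdim(V_t,G)=0$ I will show that every $h\in V_t$ induces the same effective classifier (otherwise some vertex $x$ would carry both an $h$ with $\tildeh_G(x)=+1$ and an $h'$ with $\tildeh_G(x)=-1$, which one checks spawns a depth-$1$ shattered tree rooted at $x$), so committing any $h\in V_t$ is perfect; when $d_t=\sdim(V_t,G)\ge1$ I commit the classifier defined by $h_t(v)=+1$ iff $\sdim(B_{V_t}(v),G)<d_t$. The target is the dimension-drop lemma: every mistake strictly decreases $\sdim(V_t,G)$. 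False-positive safety is immediate, since a false positive can occur only at a vertex $v$ with $h_t(v)=+1$, whose update sets $V_{t+1}=B_{V_t}(v)$ with $\sdim<d_t$ by construction; as $\sdim$ begins at $\sdim(\cH,G)$ and is never negative, this caps the mistake count at $\sdim(\cH,G)$.

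The main obstacle is false-negative safety, i.e.\ showing that any vertex $v$ where the committed classifier is effectively negative ($\tildeh_{t,G}(v)=-1$) satisfies $\sdim(A_{V_t}(v),G)\le d_t-1$; this is where the asymmetry between the full-information class $A$ and the existential class $B$ must be reconciled. I will argue by contradiction: $\tildeh_{t,G}(v)=-1$ forces $h_t(u)=-1$ for every $u\in N_G^+[v]$, hence $\sdim(B_{V_t}(u),G)=d_t$ for all such $u$ by the definition of $h_t$; if in addition $\sdim(A_{V_t}(v),G)=d_t$, I glue the depth-$d_t$ tree shattered by $A_{V_t}(v)$ under the false-negative edge of $v$ together with the depth-$d_t$ trees shattered by each $B_{V_t}(u)$ under the false-positive edges, producing a shattered tree of depth $d_t+1$ for $V_t$ and contradicting $\sdim(V_t,G)=d_t$. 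The delicate points to verify are that every required edge of the glued tree is present and consistent (the inclusive neighborhood guarantees the false-positive edge at $u=v$ exists and that each $B_{V_t}(u)$ is nonempty) and that the consistency witnesses of the glued subtrees remain valid witnesses for $V_t$; checking these reduces to routine neighborhood bookkeeping against \Cref{def:strategic-tree}.
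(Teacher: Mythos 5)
Your proposal is correct and follows essentially the same route as the paper: the lower bound descends a shattered tree and retroactively assigns original features via the path's consistency witness (the paper's ``reverse-engineering'' step), and the upper bound is exactly $\StrategicSOA$ --- label $v$ positive iff the false-positive restriction drops the dimension --- with the false-negative case handled by the same gluing-to-depth-$(d_t+1)$ contradiction. Your explicit treatment of the $\sdim(V_t,G)=0$ base case is a minor additional care the paper leaves implicit, but the substance is identical.
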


We divide the proof of \Cref{thm:main-sldim} into two parts: the lower bound direction is established in \Cref{thm:main-lower-bound}, and the upper bound direction is established in \Cref{thm:main-upper-bound}.

\begin{theorem}[Lower bound part of \Cref{thm:main-sldim}]
    \label{thm:main-lower-bound}
    For any pair of hypothesis class $\cH$ and manipulation graph $G$, any deterministic online learning algorithm $\Alg$ must suffer a mistake lower bound of $\mistake_{\Alg}(\cH,G)\ge\sdim(\cH,G)$.
\end{theorem}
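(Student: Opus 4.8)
The plan is to establish the lower bound constructively: for an arbitrary deterministic algorithm $\Alg$, I will exhibit a single realizable input sequence on which $\Alg$ makes at least $d := \sdim(\cH,G)$ mistakes, which suffices since $\mistake_{\Alg}(\cH,G)$ is the worst-case mistake count over realizable sequences. (If $d=0$ the claim is vacuous, so assume $d \ge 1$.) By definition of $\sdim$ there is a Strategic Littlestone tree of depth $d$ shattered by $\cH$ under $G$; the adversary will use it to drive $\Alg$ down one root-to-leaf path, extracting one mistake per level.

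First I would specify the adversary purely in terms of the observations it generates. It maintains a current node, initialized to the root $x_1'$. In round $t$, once $\Alg$ commits to $h_t$, the adversary examines the effective classifier $\tildeh_{t,G}$ induced by $h_t$ at the current node $x_t'$. If $\tildeh_{t,G}(x_t') = -1$, then $h_t(v) = -1$ for every $v \in N_G^+[x_t']$; the adversary forces a false negative by producing the observation $(x_t', +1)$ and descends the false-negative edge. If instead $\tildeh_{t,G}(x_t') = +1$, there exists $v_t \in N_G^+[x_t']$ with $h_t(v_t) = +1$; the adversary produces the false-positive observation $(v_t, -1)$ and descends the edge $(v_t,-1)$. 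Either way $\Alg$ errs, so after $d$ rounds it has made $d$ mistakes along a root-to-leaf path $P$ whose edge labels are exactly the generated observations $(v_t, y_t)$.

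Next I would apply the consistency clause of \Cref{def:strategic-tree} to $P$, obtaining $h^\star \in \cH$ together with, for each $t$, an original feature $x_t$ such that $\tildeh^\star_G(x_t) = y_t$, where $x_t = v_t$ if $y_t = +1$ and $x_t \in N_G^-[v_t]$ if $y_t = -1$. Setting $S := (x_t, y_t)_{t \le d}$ makes $S$ realizable by $h^\star$ by construction. The key remaining step is to show that running $\Alg$ on $S$ reproduces the transcript of observations $(v_t, y_t)$, and therefore the same $d$ mistakes. I would verify round-by-round that the agent with true features $x_t$, facing $h_t$, is observed at $v_t$ with mistake type $y_t$: in the false-negative case $x_t = v_t$ and $\tildeh_{t,G}(x_t) = -1$ force the agent to stay put and be misclassified; in the false-positive case $v_t \in N_G^+[x_t]$ and $h_t(v_t) = +1$ give $v_t \in \BR_{G,h_t}(x_t)$, so a worst-case tie-break (which the model permits) moves the agent precisely to $v_t$. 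Since $\Alg$ is deterministic and only ever sees manipulated features, a straightforward induction on $t$ then shows that its committed classifiers $h_1,\dots,h_d$, and hence all observations and mistakes, coincide regardless of whether the underlying agents are the path-defining nodes $x_t'$ or the consistency-guaranteed features $x_t$.

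I expect the heart of the argument to be precisely this decoupling of an agent's true features from what the learner observes. The consistency clause is deliberately loose — a witnessing hypothesis need only label correctly some in-neighbor of $v_t$, not the node $x_t'$ that generated the false positive — so the mistake-forcing path is not realizable when read literally off the nodes. The crux is the observation that, because original features are never revealed, each node may be retroactively replaced by its consistency-guaranteed in-neighbor without perturbing the transcript, turning the forced path into an honestly realizable sequence. Once this substitution is justified, the $d$ forced mistakes on a realizable $S$ yield $\mistake_{\Alg}(\cH,G) \ge d = \sdim(\cH,G)$.
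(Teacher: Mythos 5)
Your proposal is correct and follows essentially the same route as the paper's proof: drive the algorithm down a root-to-leaf path of a depth-$d$ shattered tree by forcing a false negative when $h_t$ labels all of $N_G^+[x_t']$ negative and a false positive otherwise, then invoke the consistency clause to reverse-engineer realizable original features $x_t$ (equal to $v_t$ for false negatives, an in-neighbor of $v_t$ for false positives) that reproduce the same transcript under adversarial tie-breaking. Your explicit round-by-round verification that the substituted features leave the deterministic learner's observations unchanged is the same "reverse-engineering" step the paper highlights, just spelled out in slightly more detail.
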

\emph{Proof sketch of \Cref{thm:main-lower-bound}.}
    Let $\cT$ be a strategic Littlestone tree for $(\cH,G)$ with depth $d$.
    We will show that for any deterministic algorithm $\Alg$, there exists an adversarial sequence of agents 
    $S=(x_t,y_t)_{t\in[d]}$ such that $\Alg$ is forced to make a mistake at every round. We construct the sequence $S$ by first finding a path $x_1' \xrightarrow{(v_1,y_1)} x_2' \xrightarrow{(v_2,y_2)}\cdots x_{d}' \xrightarrow{(v_{d},y_{d})} x_{d+1}'$ in tree $\cT$ which specifies the types of mistakes that the adversary wishes to induce, then reverse-engineering this path to obtain the sequence of initial feature vectors before manipulation that is realizable under $\cH$.
    {We remark that the need for reverse-engineering is unique to the strategic setting, which is essential in resolving the information asymmetry between the learner and the adversary regarding the true features.} We formally prove this theorem in \Cref{app:lower-bound}. $\qed$

In the remainder of this section, we present an algorithm called the Strategic Standard Optimal Algorithm ($\StrategicSOA$) that achieves the instance-optimal mistake bound of $\sdim(\cH,G)$. 
We first define some notations. For any hypothesis sub-class $\cF\subset\cH$ and an observable labeled instance $(v,y)\in\cX\times\cY$, we use $\cF_G^{(v,y)}$ to denote the subset of $\cF$ that is consistent with $(v,y)$ under manipulation graph $G$. We refer to the consistency rule defined in \Cref{def:strategic-tree}, i.e., for all $v\in\cX$, $\cF_G^{(v,+1)}$ and $\cF_G^{(v,-1)}$ are defined respectively as:

$\quad \cF_G^{(v,+1)}\triangleq \{
    h\in\cF\mid\tildeh_G(v)=+1
\};\qquad
\cF_G^{(v,-1)}\triangleq \{
    h\in\cF\mid \exists x\in N_G^{-}[v]\suchthat\tildeh_G(x)=-1
\}.$

We present $\StrategicSOA$ in \Cref{algo:strategic-SOA} and prove its optimality (\Cref{thm:main-upper-bound}) in \Cref{app:upper-bound}. The high-level idea of $\StrategicSOA$ is similar to the classical $\SOA$ algorithm (\Cref{algo:classical-SOA}): it maintains a version space of classifiers consistent with the history, and chooses a classifier $h_t$ in a way that guarantees the ``progress on mistakes'' property. This means that the (strategic) Littlestone dimension of the version space should decrease whenever a mistake is made. 

However, designing $h_t$ to satisfy this property in a strategic setting is more challenging because the potential types of mistakes depend on $h_t$'s labeling in the neighborhood $N_G^+[x_t]$, where $x_t$ is unobservable.
If we directly optimize the labelings on $N_G^+[x]$ for each $x$ independently, the resulting classifier may suggest self-contradictory labelings to the nodes in the overlapping parts of $N_G^+[x]$ and $N_G^+[x']$ for different $x$ and $x'$.
Instead, the learner needs to choose a single $h_t$ that simultaneously guarantees the  ``progress on mistakes'' property for all possible $x_t$ and their neighborhoods.

In the following, we will show that this challenge can be resolved by choosing a classifier $h_t$ that labels each node $x$ only based on whether a false positive observation $(x,-1)$ can decrease the strategic Littlestone dimension, as described in Line 3 of \Cref{algo:strategic-SOA}. 
\begin{algorithm}
    \SetKwInOut{Init}{Initialization}
    \caption{The Strategic Standard Optimal Algorithm ($\StrategicSOA$)}
    \label[algorithm]{algo:strategic-SOA}
    \KwIn{Hypothesis class $\cH$, manipulation graph $G$.}
    \Init{Version space $\cH_0\gets \cH$.}
    \For{$t\in[T]$}{
        Commit to the classifier $h_t:\cX\to\{\pm1\}$ defined as follows:\\
        \quad $\forall x\in\cX, h_t(x)\gets\begin{cases}
            +1,&\text{ if }\sdim\left((\cH_{t-1})_{G}^{(x,-1)},G\right)<\sdim(\cH_{t-1},G);\\
            -1,&\text{ otherwise.}
        \end{cases}$\; 
        Observe the manipulated feature vector $v_t$ and the true label $y_t$\;
        If a mistake occurs ($h_t(v_t)\neq y_t$), update $\cH_t\gets (\cH_{t-1})_{G}^{(v_t,y_t)}$. Otherwise $\cH_t\gets\cH_{t-1}$.
    }
\end{algorithm}
\begin{theorem}[Upper bound part of \Cref{thm:main-sldim}]
    \label{thm:main-upper-bound}
    The $\StrategicSOA$ algorithm (\Cref{algo:strategic-SOA}) achieves a maximal mistake bound of 
    $\mistake_{\StrategicSOA}(\cH,G)\le \sdim(\cH,G)$.
\end{theorem}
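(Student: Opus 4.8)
The plan is to run the standard \SOA-style potential argument, using the strategic Littlestone dimension $\sdim(\cH_{t-1},G)$ of the current version space as the potential. The crux is the following invariant, which I would state and prove as the key lemma: \emph{whenever $\StrategicSOA$ makes a mistake at round $t$, the version space shrinks enough that $\sdim(\cH_t,G)\le\sdim(\cH_{t-1},G)-1$.} Granting this, the theorem is immediate. Realizability guarantees that the optimal hypothesis $\hst$ is consistent with every observation and hence survives every update, so each $\cH_t$ is nonempty and therefore $\sdim(\cH_t,G)\ge 0$. Since the potential starts at $\sdim(\cH,G)$, is unchanged on correct rounds, and drops by at least one on each mistake, the total number of mistakes on any realizable sequence is at most $\sdim(\cH,G)$.

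To prove the invariant I would split into the two mistake types; write $d=\sdim(\cH_{t-1},G)$. The \emph{false positive} case ($h_t(v_t)=+1$, $y_t=-1$) is immediate from the design of $h_t$. The update sets $\cH_t=(\cH_{t-1})_G^{(v_t,-1)}$, and by the labeling rule of \Cref{algo:strategic-SOA}, the fact that $h_t(v_t)=+1$ holds \emph{precisely because} $\sdim((\cH_{t-1})_G^{(v_t,-1)},G)<d$. Hence $\sdim(\cH_t,G)<d$ with no further work.

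The \emph{false negative} case ($h_t(v_t)=-1$, $y_t=+1$) is the main obstacle and needs a grafting argument. First I would observe that a false negative forces $v_t=x_t$ and $\BR_{G,h_t}(x_t)=\emptyset$, i.e.\ $h_t$ labels \emph{every} node of $N_G^+[x_t]$ as $-1$; by the labeling rule this means $\sdim((\cH_{t-1})_G^{(v,-1)},G)=d$ for every $v\in N_G^+[x_t]$ (each is $\le d$ by monotonicity of $\sdim$ under passing to a subclass, and is not $<d$). The update is $\cH_t=(\cH_{t-1})_G^{(x_t,+1)}$, and I want $\sdim(\cH_t,G)<d$. Suppose instead $\sdim(\cH_t,G)=d$. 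Then I can assemble a strategic Littlestone tree for $\cH_{t-1}$ of depth $d+1$, contradicting $\sdim(\cH_{t-1},G)=d$: take $x_t$ as the root, attach its false negative edge $(x_t,+1)$ to a depth-$d$ tree shattered by $(\cH_{t-1})_G^{(x_t,+1)}=\cH_t$ (which exists by the supposition), and attach each false positive edge $(v,-1)$, $v\in N_G^+[x_t]$, to a depth-$d$ tree shattered by $(\cH_{t-1})_G^{(v,-1)}$ (which exists since each has dimension $d$). This has exactly the prescribed edge set at the root and depth $d+1$.

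What makes the grafting valid --- and what I would verify carefully --- is consistency of every root-to-leaf path against $\cH_{t-1}$. A path entering via the false negative edge is witnessed by some $h$ in the subtree's class $(\cH_{t-1})_G^{(x_t,+1)}\subseteq\cH_{t-1}$; since such $h$ satisfies $\tildeh_G(x_t)=+1$, it is consistent with the root edge $(x_t,+1)$, whose $y=+1$ demands exactly $\tildeh_G(x_t)=+1$. A path entering via a false positive edge $(v,-1)$ is witnessed by some $h\in(\cH_{t-1})_G^{(v,-1)}\subseteq\cH_{t-1}$; by definition such $h$ satisfies $\exists x\in N_G^-[v]$ with $\tildeh_G(x)=-1$, which is exactly the consistency requirement for the root edge $(v,-1)$. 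In both cases the witness lies in $\cH_{t-1}$ and is consistent with the root edge, so the grafted tree is genuinely $\cH_{t-1}$-shattered of depth $d+1$, the desired contradiction. The conceptual heart is the opening observation of this case: the rule of labeling $x$ as $+1$ exactly when a false positive at $x$ would drop the dimension is precisely what guarantees that, on a false negative, \emph{all} false positive branches out of the root retain full dimension $d$ --- which is exactly what the grafting needs.
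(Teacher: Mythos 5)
Your proposal is correct and follows essentially the same argument as the paper's proof: the potential argument on $\sdim(\cH_{t-1},G)$, the immediate false-positive case, and the grafting contradiction for the false-negative case (root $x_t$, the shattered subtree of $(\cH_{t-1})_G^{(x_t,+1)}$ on the false-negative edge, and a depth-$d$ subtree of $(\cH_{t-1})_G^{(v,-1)}$ on each false-positive edge $v\in N_G^+[x_t]$). Your explicit verification of path consistency at the root edge is slightly more detailed than the paper's, but the content is the same.
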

\begin{remark}[Comparison with previous results]
    Since the mistake bound of $\StrategicSOA$ is shown to be instance-optimal across all deterministic algorithms, it improves upon the bounds established by \citet{ahmadi2023fundamental,cohen2024learnability}, which both depend on the maximum out-degree of the graph $G$. Furthermore, we show in \Cref{app:improvement} that the gap between their bounds and ours could be arbitrarily large. An extreme example is the complete graph $G$ supported on an unbounded domain, where $\sdim(\cH,G)=1$ but both previous bounds are $\infty$.
\end{remark}

\section{Agnostic Setting}
\label{sec:agnostic}
In this section, we study the regret bound in the agnostic setting. Recall that benchmark is defined as the minimum number of mistakes that the best hypothesis in $\cH$ makes, i.e.,
$\OPT\triangleq \min_{\hst\in\cH} \sum\nolimits_{t\in[T]} \indicator{\hst(\br_{G,\hst}(x_t))\neq y_t}$.
We will present an algorithm that has vanishing regret compared to $\Deltaout\cdot\OPT$ whenever the strategic Littlestone dimension is bounded, where $\Deltaout$ is the maximum out-degree of $G$. Inspired by the classical reduction framework proposed by \citet{ben2009agnostic}, our algorithm aims to reduce the agnostic problem to that of strategic online learning with expert advice by constructing a finite number of representative experts that performs almost as well as the potentially unbounded hypothesis class. The problem with a finite expert set can then be solved using the biased weighted voting algorithm proposed by \citet{ahmadi2023fundamental}.

However, establishing the reduction turns out to be more challenging in the strategic setting, as the learner can only observe manipulated features instead of the original ones. We address this problem by designing the experts to ``guess'' every possible direction the original node could have come.
To do this systematically, we first need to specify a indexing system to the in-neighborhoods of every node in the graph.
For each node $v\in\cX$, we assign a unique index to each in-neighbor in $N_G^-[v]$ from the range $\{0,1,\cdots,\Deltain\}$, where $\Deltain$ being the max in-degree of $G$. This indexing is specific to each $v$ and does not require consistency when indexing a common in-neighbor of different nodes. 
We are now ready to formally introduce our algorithms in \Cref{algo:expert,algo:agnostic} and analyze their regret in \Cref{thm:agnostic}.

\begin{algorithm}[htbp]
    \SetKwInOut{Init}{Initialization}
    \caption{Agnostic Online Strategic Classification Algorithm}
    \label[algorithm]{algo:agnostic}
    \KwIn{Hypothesis class $\cH$, manipulation graph $G$.}
    Let $d\gets\sdim(\cH,G)$\;
    \ForEach{$m\le d,i_{1:m},r_{1:m}$ where $1\le i_1<\cdots<i_m\le T$, $0\le r_1,\cdots,r_m\le\Deltain$}{
        Construct $\expert(i_{1:m},r_{1:m})$ as in \Cref{algo:expert}.
    }
    Run \emph{Biased Weighted Majority Vote} (\Cref{algo:biased-weighted-maj}) on the set of experts.
\end{algorithm}

\begin{algorithm}[!htbp]
    \SetKwInOut{Init}{Initialization}
    \caption{$\expert(i_1,\cdots,i_m,r_1,\cdots,r_m;G)$}
    \label[algorithm]{algo:expert}
    \KwIn{Hypothesis class $\cH$, manipulation graph $G$, indices for mistakes $1\le i_1<\cdots<i_m\le T$, indices for manipulation directions $0\le r_1,\cdots,r_m\le\Delta_G^-$, 
    \\\qquad\quad the sequence of post-manipulation agents $(v_t,y_t)_{t\in[T]}$ received sequentially.}
    \KwOut{Classifiers $(\hat{h}_t)_{t\in[T]}$ outputted sequentially.}
    \Init{Simulate an instance of the $\StrategicSOA$ algorithm with parameters $(\cH,G)$.}
    \For{$t\in[T]$}{
        $\hat{h}_t\gets$ classifier outputted by the $\StrategicSOA$ algorithm\;
        Observe the manipulated feature vector $v_t$ and the true label $y_t$\;
        \If{ $t\in\{i_1,\cdots,i_m\}$ (suppose $t=i_k$)}{
            $\hat{x}_t\gets $ the in-neighbor in $N_G^-[v_t]$ with index $r_k$\tcp{guess of the original feature vector $x_t$}
            $\hat{v}_t\gets \br_{\hat{h}_t,G}(\hat{x}_t)$\tcp{simulate the post-manipulation feature vector in response to $\hat{h}_t$}
            Update the $\StrategicSOA$ algorithm with instance $(\hat{v}_t,- \hat{h}_t(\hat{v}_t))$. 
        }
    }
\end{algorithm}

\begin{theorem}
    \label{thm:agnostic}
    For any adaptive adversarial sequence $S$ of length $T$, the {Agnostic Online Strategic Classification} algorithm (\Cref{algo:agnostic}) has regret bound
    \[
        \regret(S,\cH,G)\le O\left(
            \Delta_G^+\cdot\OPT+\Delta_G^+\cdot\sdim(\cH,G)\cdot(\log T+\log\Delta_G^-)
        \right),
    \]
    where $\Delta_G^+$ (resp. $\Delta_G^-$) denotes the maximum out-degree (resp. in-degree) of graph $G$.
\end{theorem}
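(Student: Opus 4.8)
The plan is to follow the agnostic-to-realizable reduction of \citet{ben2009agnostic}, adapted to the strategic partial-information setting, and to reduce the whole analysis to three ingredients: (i) a bound on the number of constructed experts, (ii) a bound on the number of mistakes of the single best expert against $S$, and (iii) the regret guarantee of \emph{biased weighted majority voting} run on a finite expert set. Concretely, if the best expert makes at most $M^\star$ mistakes on $S$, then biased voting makes $O\!\left(\Delta_G^+\,(M^\star+\log|\experts|)\right)$ mistakes; I will then show $M^\star\le \OPT+\sdim(\cH,G)$ and $\log|\experts|=O\!\left(\sdim(\cH,G)(\log T+\log\Delta_G^-)\right)$, and substituting these (and using $\regret\le \text{(learner mistakes)}-\OPT$) gives the theorem.

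First I would count the experts. Each expert is indexed by some $m\le d:=\sdim(\cH,G)$ update rounds $1\le i_1<\cdots<i_m\le T$ together with $m$ direction indices $r_1,\dots,r_m\in\{0,\dots,\Delta_G^-\}$, so $|\experts|\le\sum_{m=0}^{d}\binom{T}{m}(\Delta_G^-+1)^m\le (d+1)\,T^{d}(\Delta_G^-+1)^{d}$, whence $\log|\experts|=O\!\left(d(\log T+\log\Delta_G^-)\right)$, which is exactly the second term of the claimed bound once multiplied by $\Delta_G^+$.

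The crux is the best-expert bound. I isolate the subsequence $S^+$ of rounds on which $\hst$ (the $\OPT$-minimizer) is correct; since $S^+$ is realizable via $\hst$, a conceptual run of $\StrategicSOA$ on $S^+$ errs on at most $d$ rounds $i_1<\cdots<i_m$ by \Cref{thm:main-upper-bound}. Because the observed feature $v_{i_k}$ is always an out-neighbor of the true original $x_{i_k}$, we have $x_{i_k}\in N_G^-[v_{i_k}]$, so some index $r_k$ points to it. I then argue by induction that $\expert(i_{1:m},r_{1:m})$ exactly replicates this run: with the correct guess $\hat x_{i_k}=x_{i_k}$ the expert recomputes the best response $\hat v_{i_k}=\br_{\hat h_{i_k},G}(x_{i_k})$ to its \emph{own} current classifier — matching what $\StrategicSOA$-on-$S^+$ would see — and the synthetic label $-\hat h_{i_k}(\hat v_{i_k})$ equals $y_{i_k}$ precisely because $i_k$ is an error round, so that $y_{i_k}=-\hat h_{i_k}(\br_{\hat h_{i_k},G}(x_{i_k}))$. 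Consequently both processes share the same version space at every round: the expert errs on exactly the $m\le d$ good error rounds and on at most $\OPT$ bad rounds, giving $M^\star\le \OPT+d$.

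Finally I would invoke, and re-verify in the partial-information regime, the biased weighted majority guarantee of \citet{ahmadi2023fundamental}: maintaining multiplicative weights over $\experts$ and labeling a node positive only when a biased (roughly $1/(\Delta_G^++1)$) weighted fraction of the experts' effective classifiers vote positive, a potential argument on $W_t=\sum_{e}w_e$ yields learner mistakes $\le O\!\left(\Delta_G^+ M^\star+\Delta_G^+\log|\experts|\right)$. Combining with the two displays above and absorbing the lower-order $\Delta_G^+ d$ term into $\Delta_G^+ d\log T$ proves the bound. I expect the main obstacle to be establishing this biased-voting potential decrease \emph{without counterfactual access}: on a false-positive mistake the learner observes only $v_t$ and cannot directly identify which experts erred on the hidden original $x_t$, so the down-weighting rule and the bias threshold must be chosen so that every learner mistake still provably shrinks $W_t$ by a $\Theta(1/\Delta_G^+)$ factor — it is exactly this asymmetry, paired with the directional guessing that makes the best expert faithful, where the strategic partial-information difficulty concentrates.
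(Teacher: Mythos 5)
Your proposal is correct and follows essentially the same route as the paper: the same expert count, the same biased-weighted-majority guarantee of \citet{ahmadi2023fundamental} (which the paper also re-verifies under partial information), and the same key coupling of the best expert's simulated $\StrategicSOA$ instance with a conceptual run on a realizable surrogate sequence via the direction indices $r_{1:m}$, yielding $\OPTexperts\le\OPT+\sdim(\cH,G)$. The only cosmetic difference is that you restrict attention to the subsequence of rounds where $\hst$ is correct, whereas the paper's \Cref{lemma:key} relabels the entire sequence by $\tildeh^\star_G(x_t)$; both bookkeepings give the same bound.
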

\begin{remark}
    \citet{ahmadi2023fundamental} showed that there exists instances in which all deterministic algorithms must suffer regret $\Omega(\Delta_G^+\cdot\OPT)$, which means the first term in the above bound is necessary.
    The second term connects to our 
    instance-wise lower bound of $\sdim(\cH,G)$ in \Cref{thm:main-lower-bound}.
\end{remark}
\textit{Proof sketch of \Cref{thm:agnostic}.}
    We use $\experts$ to denote the set of experts constructed in \Cref{algo:expert}, and define $\OPTexperts$ as the minimum number of mistakes made by the best expert $\optexpert\in\experts$, had the agents responded to $\optexpert$. Then the \emph{Biased Weighted Majority Vote} algorithm from \citet{ahmadi2023fundamental} guarantees that the number of mistakes made by \Cref{algo:agnostic} is at most
    $\Delta_G^+\cdot\OPTexperts+\Delta_G^+\cdot\log|\experts|$. According to our construction of experts, the total number of experts satisfies $\log|\experts|\le\log\big(\sum_{m\le d}\binom{T}{m}\cdot (\Deltain)^m\big)\le O(d\cdot(\log T+\log\Deltain))$, where $d=\sdim(\cH,G)$ is the strategic Littlestone dimension. Therefore, it suffices to show that $\OPTexperts$ is not too much larger than $\OPT$---in other words, the set of experts $\experts$ are \emph{representative} enough of the original hypothesis class $\cH$ in their ability of performing strategic classification. We use the following lemma, which we prove in \Cref{app:agnostic} by establishing the equivalence between the $\StrategicSOA$ instance running on the sequence labeled by the effective classifier and the $\StrategicSOA$ instance simulated by a specific expert.
    \begin{lemma}[Experts are representative]
    \label[lemma]{lemma:key}
    For any hypothesis $ h\in\cH$ and any sequence of agents $S$, 
    there exists an expert $\mathfrak{e}_h\in\experts$ that makes at most $\sdim(\cH,G)$ more mistakes than $h$.
    \end{lemma}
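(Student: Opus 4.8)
The plan is to exhibit, for each fixed $h\in\cH$ and each realized sequence $S=(x_t,y_t)_{t\in[T]}$, an explicit tuple $(i_{1:m},r_{1:m})$ whose associated expert $\mathfrak{e}_h=\expert(i_{1:m},r_{1:m})$ obeys the claimed bound, and then to observe that this tuple belongs to the enumerated family $\experts$. The guiding idea is to relabel $S$ by $h$'s own effective classifier so that it becomes realizable, and to force the internal $\StrategicSOA$ simulated by $\mathfrak{e}_h$ to behave exactly like a fresh $\StrategicSOA$ run on that realizable relabeling. The quantity the lemma controls is the counterfactual mistake count of $\mathfrak{e}_h$, i.e.\ the mistakes it would make had the agents responded to it, which is the benchmark $\OPTexperts$ targets.

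Concretely, define the relabeled sequence $S_h:=(x_t,\tildeh_G(x_t))_{t\in[T]}$. By construction $h$ makes no mistakes on $S_h$, so $S_h$ is realizable w.r.t.\ $\cH$. Run a reference copy of $\StrategicSOA$ on $S_h$ in the usual deployed manner: at round $t$ it emits a classifier $g_t$ with effective classifier $\tilde g_t$ under $G$, the agent $x_t$ best-responds producing the manipulated feature $w_t=\br_{G,g_t}(x_t)$, and the version space is refined exactly on the rounds where $\tilde g_t(x_t)\ne\tildeh_G(x_t)$. By \Cref{thm:main-upper-bound} this happens at most $\sdim(\cH,G)$ times; let $i_1<\cdots<i_m$ with $m\le\sdim(\cH,G)$ be precisely these mistake rounds. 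At each such round $w_{i_k}\in N_G^+[x_{i_k}]$, hence $x_{i_k}\in N_G^-[w_{i_k}]$, so I set $r_k\in\{0,\dots,\Deltain\}$ to be the index assigned to $x_{i_k}$ within $N_G^-[w_{i_k}]$. This determines a tuple that clearly lies in $\experts$.

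The core step is an inductive coupling showing that, in the counterfactual world where $\mathfrak{e}_h$ is the deployed classifier (so agents respond to the expert's own $\hat h_t$), the simulated $\StrategicSOA$ stays in lockstep with the reference run. I maintain the invariant that the two version spaces coincide before each round; determinism of $\StrategicSOA$ then gives $\hat h_t=g_t$, and since the same agent faces the same classifier, $v_t=w_t$. On a non-designated round neither copy updates (our $i_{1:m}$ is exactly the reference mistake set, so a non-designated round is a reference non-mistake), and the invariant persists. On a designated round $t=i_k$, the correct guess $\hat x_t=x_{i_k}$ gives $\hat v_t=\br_{\hat h_t,G}(x_{i_k})=w_{i_k}$ and forced label $-\hat h_t(\hat v_t)=-\tilde g_t(x_{i_k})=\tildeh_G(x_{i_k})$ (the last equality because round $i_k$ is a reference mistake and labels lie in $\{\pm1\}$), so the expert refines its version space by exactly $(w_{i_k},\tildeh_G(x_{i_k}))$, identical to the reference update $(\cH_{t-1})_G^{(w_{i_k},\tildeh_G(x_{i_k}))}$. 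This designated-round computation is the crux and the main obstacle: it is where the partial-information difficulty (the learner never sees $x_t$) must be resolved, and it succeeds only because the in-neighbor indexing lets the guess recover the true pre-manipulation feature, so that the synthetic forced-mistake update coincides with what $\StrategicSOA$ would have received under full information.

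Finally, with $\hat h_t=g_t$ established for all $t$, the expert's mistake indicator at round $t$ equals $\indicator{\tilde g_t(x_t)\ne y_t}$. Since labels lie in $\{\pm1\}$, $\{t:\tilde g_t(x_t)\ne y_t\}\subseteq\{t:\tildeh_G(x_t)\ne y_t\}\cup\{t:\tilde g_t(x_t)\ne\tildeh_G(x_t)\}$, so each expert mistake is charged either to a genuine mistake of $h$ on $S$ or to a reference mistake on $S_h$. As the latter number is at most $\sdim(\cH,G)$, the expert makes at most (mistakes of $h$) $+\,\sdim(\cH,G)$ mistakes, which is the claim. The only remaining care is to couple the agents' tie-breaking identically across the reference run and the counterfactual deployment, which is legitimate because at every round both face the same classifier, and which does not affect the counts.
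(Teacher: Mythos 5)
Your overall architecture matches the paper's: relabel $S$ by $\tildeh_G$ to get a realizable sequence, run a reference $\StrategicSOA$ on it to extract at most $\sdim(\cH,G)$ mistake rounds $i_{1:m}$, encode the pre-manipulation features at those rounds via in-neighbor indices $r_{1:m}$, couple the expert's simulated $\StrategicSOA$ with the reference run by induction on the version space, and charge each counterfactual expert mistake either to a mistake of $h$ on $S$ or to a reference mistake. The forced-label computation and the final triangle-inequality counting are also the same as in the paper.

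However, there is a concrete gap in how you choose $r_k$ and in which run you analyze. You define $r_k$ as the index of $x_{i_k}$ inside $N_G^-[w_{i_k}]$, where $w_{i_k}=\br_{G,g_{i_k}}(x_{i_k})$ is the response to the \emph{reference} classifier, and you then verify the coupling ``in the counterfactual world where $\mathfrak{e}_h$ is the deployed classifier,'' so that the expert's observation equals $w_t$. But in \Cref{algo:agnostic} the experts are subroutines fed the \emph{actual} observations $v_t=\br_{G,h_t}(x_t)$, where $h_t$ is the biased-majority-vote classifier --- not $\hat h_t$ and not $g_t$. The quantity $\OPTexperts$ that \Cref{lemma:biased-maj-strong} needs is the counterfactual mistake count of an expert whose internal state evolves on these actual observations; only the per-round evaluation $\indicator{\hat h_t(\br_{\hat h_t,G}(x_t))\neq y_t}$ is counterfactual. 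Since the in-neighbor indexing is specific to each node (the paper explicitly does not require consistency across different nodes' in-neighborhoods), the expert at round $i_k$ looks up index $r_k$ in $N_G^-[v_{i_k}]$ for the actually observed $v_{i_k}$, which in general differs from $w_{i_k}$; your $r_k$ may then point to the wrong in-neighbor, the recovered $\hat x_t$ need not equal $x_{i_k}$, and the induction breaks. The fix is exactly the paper's: choose $r_k$ so that $x_{i_k}$ is the $r_k$-th in-neighbor of the \emph{actual} $v_{i_k}$ (legitimate since $v_{i_k}\in N_G^+[x_{i_k}]$ whatever classifier the agent responded to), and observe that the re-simulation step $\hat v_t=\br_{\hat h_t,G}(\hat x_t)$ makes the expert's internal state depend on $v_t$ only through the recovered $\hat x_t$, so the coupling with the reference run holds for the expert as it actually executes.
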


\section{Unknown Manipulation Graph}
\label{sec:unknown-graph}
In this section, we generalize the main settings to relax the assumption that the learner has full knowledge about the underlying manipulation graph $G$. Instead, we use a graph class $\cG$ to capture the learner's knowledge about the manipulation graph. In \Cref{sec:graph-realizable}, we begin with the \emph{realizable graph class} setting, where the true manipulation graph remains the same across rounds and belongs to the family $\cG$. We then study the \emph{agnostic graph class} setting in \Cref{sec:graph-agnostic}, where we drop both assumptions and allow our regret bound to depend on the ``imperfectness'' of $\cG$. In both cases, we assume the hypothesis class $\cH$ is also agnostic, which encompasses the setting where $\cH$ is realizable.

\subsection{Realizable graph classes}
\label{sec:graph-realizable}
In this section, we assume that there exists a perfect (but unknown) graph $\Gstar\in\cG$, such that each agent $(x_t,y_t)\in S$ manipulates according to $\Gstar$. We define the benchmark $\OPTH$ to be the optimal number of mistakes made by the best $\hst\in\cH$ assuming that each agent best responds to $\hst$ according to $\Gstar$. Formally, 
$
\OPTH\triangleq\min_{\hst\in\cH}\indicator{
\hst(\br_{\Gstar,\hst}(x_t))\neq y_t
}$.
Same to our main setting, we assume that the learner only observes the post-manipulation features $v_t=\br_{\Gstar,h_t}(x_t)$ after they commit to classifier $h_t$, but cannot observe the original features $x_t$. 

 Our algorithm (\Cref{algo:graph-realizable}) for this setting leverages two main ideas.  First, to overcome the challenge that $\Gstar$ is unknown to the experts, we blow up the number of experts by a factor of $|\cG|$ and let each expert simulate their own $\StrategicSOA$ instance according to some internal belief of $\Gstar$. Since the regret bound depends logarithmic on the number of experts, this only introduces an extra $\log|\cG|$ term, which has been shown by \citet{cohen2024learnability} to be unavoidable even when the learner has access to the original features. 
 
 Our second idea involves re-examining the correctness of \Cref{algo:biased-weighted-maj} for bounded expert class to the scenario where the input $G$ is a pessimistic estimate of the true graph $\Gstar$, i.e., $G$ contains all the edges in $\Gstar$ but potentially some extra edges. This allows us to use $\Gunion$ whose edge set is taken to be the union of all egdes in $\cG$. Combining these two ideas, we present our algorithm (\Cref{algo:graph-realizable}) below and establish its regret bound (\Cref{thm:graph-realizable}) in \Cref{app:unknown-graph}.

 \begin{algorithm}
    \SetKwInOut{Init}{Initialization}
    \caption{Online Strategic Classification For Relizable Graph Class}
    \label[algorithm]{algo:graph-realizable}
    \KwIn{Hypothesis class $\cH$, graph class $\cG$.}
    \ForEach{$G\in\cG$}{
        Let $d_G\gets\sdim(\cH,G)$\;
        \ForEach{$m\le d_G,i_{1:m},r_{1:m}$ where $1\le i_1<\cdots<i_m\le T$, $0\le r_1,\cdots,r_m\le\Deltain$}{
            Construct $\expert(i_{1:m},r_{1:m};G)$ as in \Cref{algo:expert}.
        }
    }
    Let $\Gunion\gets(\cX,\sum_{G\in\cG}\cE_{G})$ be the union of graphs in $\cG$\;
    Run \emph{Biased Weighted Majority Vote} (\Cref{algo:biased-weighted-maj}) on the set of experts under graph $\Gunion$.
\end{algorithm}
 
\begin{theorem}
    \label{thm:graph-realizable}
    For any realizable graph class $\cG$ and any adaptive adversarial sequence $S$ of length $T$, \Cref{algo:graph-realizable} has regret bound
    \[
        \regret(S,\cH,G)\le O\left(
            \Deltaunion\cdot\left(\OPTH+d_{\cG}\cdot(\log T+\log\Deltamax)+\log|\cG|\right)
        \right),
    \]
    where
    $d_{\cG}\triangleq\max_{G\in\cG}\sdim(\cH,G)$ is the maximum strategic Littlestone dimension for all graphs in $\cG$,
    $\Deltaunion\triangleq \Delta_{\Gunion}^+$ is the maximum out-degree of $\Gunion$ (i.e., the union of graphs in $\cG$), and $\Deltamax\triangleq\max_{G\in\cG}\Deltain$ is the maximum max in-degree over graphs in $G$.
\end{theorem}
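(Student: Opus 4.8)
The plan is to adapt the agnostic-setting argument of \Cref{thm:agnostic} to the unknown-graph setting, with two modifications that absorb the extra uncertainty about $\Gstar$. As before, the analysis factors through the \emph{Biased Weighted Majority Vote} subroutine together with the representativeness of the expert set; the new feature is that experts are now indexed by a guessed graph $G\in\cG$ in addition to the mistake rounds $i_{1:m}$ and the manipulation directions $r_{1:m}$. I would first count the experts: summing the construction over all $G\in\cG$ and all admissible $(m,i_{1:m},r_{1:m})$ yields
\[
    |\experts|\;\le\;|\cG|\cdot\sum_{m\le d_{\cG}}\binom{T}{m}\,(\Deltamax)^m,
\]
so that $\log|\experts|\le\log|\cG|+O\big(d_{\cG}(\log T+\log\Deltamax)\big)$, where $d_{\cG}=\max_{G\in\cG}\sdim(\cH,G)$. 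This is exactly where the additive $\log|\cG|$ term in the bound originates.

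Second, I would establish representativeness against the \emph{true} graph $\Gstar$. Since $\Gstar\in\cG$, \Cref{algo:graph-realizable} constructs the entire family of experts associated with $\Gstar$, each of which internally simulates a $\StrategicSOA$ instance run with the correct graph. Hence \Cref{lemma:key} applies verbatim with $G=\Gstar$: taking $\hst$ to be the benchmark hypothesis attaining $\OPTH$ mistakes under $\Gstar$-manipulation, there is an expert $\optexpert\in\experts$ built from $\Gstar$ whose number of mistakes---measured when agents best respond to the expert's own classifiers via $\Gstar$---is at most $\OPTH+\sdim(\cH,\Gstar)\le\OPTH+d_{\cG}$. This gives $\OPTexperts\le\OPTH+d_{\cG}$.

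The third and main step is to re-examine the Biased Weighted Majority Vote when it is executed on the conservative graph $\Gunion$ rather than on $\Gstar$. In \Cref{thm:agnostic} this subroutine guarantees a learner mistake count of at most $\Delta_G^+\cdot\OPTexperts+\Delta_G^+\cdot\log|\experts|$ when run with the true graph $G$; here the learner only has access to $\Gunion\supseteq\Gstar$. The key structural fact is that every edge of $\Gstar$ is an edge of $\Gunion$, so any manipulation $x_t\to v_t$ that an agent actually performs is a legal $\Gunion$-manipulation. I would re-open the potential argument of \citet{ahmadi2023fundamental} and verify that both halves of the charging survive the substitution: a learner false positive at $v_t$ is still chargeable to a $1/(\Deltaunion+1)$-biased fraction of the experts' weight over the (larger) neighborhoods $N^+_{\Gunion}[\cdot]$, while a learner false negative still forces a constant fraction of the experts to err on the \emph{true} manipulated instance, which remains reachable in $\Gunion$. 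Using $\Gunion$ in place of $\Gstar$ only enlarges the neighborhoods the learner hedges against---it never leaves a genuine manipulation uncovered---so correctness is preserved at the sole cost of replacing the out-degree factor by $\Deltaunion=\Delta_{\Gunion}^+$, yielding a mistake bound of $\Deltaunion\cdot\OPTexperts+\Deltaunion\cdot\log|\experts|$. I expect this conservative-graph re-analysis, especially its interaction with the partial-information feedback, to be the principal obstacle, since it requires reasoning inside the biased-voting argument rather than invoking it as a black box.

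Finally, combining the three ingredients, the number of learner mistakes is at most
\[
    \Deltaunion\big(\OPTexperts+\log|\experts|\big)\;\le\;\Deltaunion\Big(\OPTH+d_{\cG}+\log|\cG|+O\big(d_{\cG}(\log T+\log\Deltamax)\big)\Big),
\]
and since the regret never exceeds this mistake count, we obtain the claimed bound $O\big(\Deltaunion(\OPTH+d_{\cG}(\log T+\log\Deltamax)+\log|\cG|)\big)$ after absorbing lower-order terms.
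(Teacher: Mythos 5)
Your proposal is correct and follows essentially the same route as the paper: blow up the expert set by a factor of $|\cG|$ (contributing the additive $\log|\cG|$), apply \Cref{lemma:key} to the sub-family of experts built from $\Gstar$ to get $\OPTexperts_{\Gstar}\le\OPTH+d_{\cG}$, and re-derive the Biased Weighted Majority Vote guarantee when it is run on the conservative supergraph $\Gunion\supseteq\Gstar$ (the paper's \Cref{lemma:biased-maj-strong}), where the decisive sandwich $N^+_{\Gstar}[v_t]\subseteq\hat{N}[v_t]\subseteq N^+_{\Gunion}[v_t]$ makes both the weight-decrease and the ``penalized experts truly erred'' claims go through at the cost of a $\Deltaunion$ factor. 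You correctly identified this conservative-graph re-analysis as the step that cannot be invoked as a black box, which is exactly where the paper spends its effort.
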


\begin{remark}[Implications in the realizable setting]
    In the realizable setting where $\OPTH=0$, 
    \Cref{thm:graph-realizable} implies a mistake bound of $\Tilde{O}(\Deltaunion\cdot d_{\cG}+\log|\cG|)$.
    This bound is optimal up to logarithmic factors due to a lower bound proved by \citet[Proposition 14]{cohen2024learnability}. They constructed an instance with $|\cG|=|\cH|=\Theta(n)$ in which any deterministic algorithm makes $\Omega(n)$ mistakes. In this instance, our bound evaluates to be $\Tilde{O}(n)$ since $\Deltaunion=\Theta(n)$ and $d_{\cG}=1$.
\end{remark}

\subsection{Agnostic graph classes}
\label{sec:graph-agnostic}
In this section, we consider a fully agnostic setting where each agent $(x_t,y_t)$ may behave according to a different manipulation graph $G_t\subseteq \Gunion$.
We define the benchmark $\OPTG$ to count the number of times that the best graph $\Gstar\in\cG$ fails to model the local manipulation structure under $G_t$, and $\OPTH$ is defined as in \Cref{sec:graph-realizable}, using the graph $\Gstar$ that achieves $\OPTG$.
    \[\OPTG\triangleq \min\limits_{\Gstar\in\cG}\sum_{t=1}^T \indicator{N_{\Gstar}^+[x_t]\neq N_{G_t}^+[x_t]},\ 
    \OPTH\triangleq\min\limits_{\hst\in\cH}\sum_{t=1}^T \indicator{
    \hst(\br_{\Gstar,\hst}(x_t)\neq y_t)
    }.\footnote{We can also derive regret bounds when $\OPTH$ is defined based on the agents' best responses according to their own manipulation graph $G_t$ instead of $\Gstar$. In this case, the regret bound would be the same up to constant factors.}\] 

Assuming access to an upper bound $N$ of $\OPTG$, we present \Cref{algo:graph-agnostic} that achieves a regret bound of $\Tilde{O}\left(\Deltaunion(N+\OPTH+d_{\cG})\right)$, as shown in \Cref{thm:graph-agnostic}. We additionally apply the standard doubling trick to remove the requirement of knowing $N$. More details can be found in \Cref{app:agnostic-graph}.

\section{Discussion and Future Research}
\label{sec:discussion}
\paragraph{Improved bounds for the agnostic setting.} An immediate direction for future research is tightening our bounds in the agnostic setting under known manipulation graph. Note that our upper bound is $\Tilde{O}\left(\Deltaout\cdot(\OPT+\sdim(\cH,G)\right)$ whereas the lower bounds are $\Omega(\Deltaout\cdot\OPT)$ from \citet{ahmadi2023fundamental} and $\Omega(\sdim(\cH,G))$ from \Cref{thm:main-lower-bound}. The extra $\Deltaout$ factor is introduced by the strategic learning-with-expert-advice algorithm, for which all known results have the dependency on $\Deltaout$.

\paragraph{Randomized learners.}
Our results mainly focus on deterministic learners. It is an important open problem to find the corresponding characterizations for randomized learners. In \Cref{app:randomness}, we provide a family of realizable instances that witness a super-constant gap between the optimal mistake of deterministic and randomized algorithms. This is in contrast to their classical counterparts which are always a factor of $2$ within each other. 
One challenge (among others) of proving a tight lower bound in the randomized setting is controlling the learner's information about the agents' original features, as the adversary can no longer ``look-ahead'' at an algorithm's future classifiers.

\subsection*{Acknowledgements}
We thank Avrim Blum for the helpful comments and discussions. This work was supported in part by the National Science Foundation under grants CCF-2212968 and ECCS-2216899, by the Simons Foundation under the Simons Collaboration on the Theory of Algorithmic Fairness, and by the Defense Advanced Research Projects Agency under cooperative agreement HR00112020003. The views expressed in this work do not necessarily reflect the position or the policy of the Government and no official endorsement should be inferred. Approved for public release; distribution is unlimited.
\bibliographystyle{plainnat}
\bibliography{refs-main}

\newpage

\appendix

\section{Supplementary Materials for Section~\ref{sec:prelim}}
\label{app:SOA}
We present the $\SOA$ algorithm in \Cref{algo:classical-SOA}.
\begin{algorithm}
    \SetKwInOut{Init}{Initialization}
    \caption{The Classical Standard Optimal Algorithm ($\SOA$)}
    \label[algorithm]{algo:classical-SOA}
    \KwIn{Hypothesis class $\cH$.}
    \Init{Version space $\cH_0\gets \cH$.}
    \For{$t\in[T]$}{
        Observe $x_t$\;
        For $y\in\{\pm1\}$, let $\cH_{t-1}^{(x_{t},y)}\gets \{h\in\cH_{t-1}\mid h(x_t)=\hy\}$\;
        Predict $\hy_t\gets\argmax_{y}\dim(\cH_{t-1}^{(y)})$\;
        Observe $y_t$ and update version space $\cH_t\gets \cH_{t-1}^{(x_t,y_t)}$.
    }
\end{algorithm}

\section{Supplementary Materials for Section~\ref{sec:realizable}}

\subsection{Proof of \Cref{thm:main-lower-bound}}
\label{app:lower-bound}
\medskip
\noindent\textbf{\Cref{thm:main-lower-bound}} (Restated)\textbf{.}\emph{
    For any pair of hypothesis class $\cH$ and manipulation graph $G$, any deterministic online learning algorithm $\Alg$ must suffer a mistake lower bound of $\mistake_{\Alg}(\cH,G)\ge\sdim(\cH,G)$.
}
\begin{proof}[Proof of \Cref{thm:main-lower-bound}]
Recall that to construct an adversarial sequence of agents 
    $S=(x_t,y_t)_{t\in[d]}$ such that $\Alg$ is forced to make a mistake at every round, we will first find a path $x_1' \xrightarrow{(v_1,y_1)} x_2' \xrightarrow{(v_2,y_2)}\cdots x_{d}' \xrightarrow{(v_{d},y_{d})} x_{d+1}'$ in tree $\cT$ which specifies the types of mistakes that the adversary wishes to induce, then reverse-engineering this path to obtain the sequence of initial feature vectors before manipulation that is realizable under $\cH$.

\textbf{Constructing the path.}
    We initialize $x_1'$ to be the root of the tree $\cT$. For all $t\le d$ and given the history (partial path) $x_1' \xrightarrow{(v_1,y_1)} \cdots x_{t-1}' \xrightarrow{(v_{t-1},y_{t-1})} x_{t}'$, 
    we find the edge $(v_t,y_t)$ and the next node $x_{t+1}'$ as follows: run the online learning algorithm $\Alg$ for $t-1$ rounds with inputs $(v_{t'},y_{t'})_{t'\le t-1}$, and let $h_t$ be the outputted classifier at round $t$. We examine the labels of $h_t$ in the out-neighborhood $N_G^+[x_t']$ under graph $G$ and consider the following two cases.
    
    \quad \textbf{Case 1: False negatives.} If all the feature vectors in $N_G^+[x_t']$ is labeled as negative by $h_t$, then the adversary will induce a false negative mistake by letting the post-manipulation feature vector be $x_t'$ (which is same as the original feature vector $x_t$) and the true label be positive, i.e.,
    $(v_t,y_t)\triangleq (x_{t}',+1)$. We then choose the next node $x_{t+1}'$ to be the child of $x_t'$ along the false negative edge $(x_{t}',+1)$ in $\cT$. 
    
    \quad \textbf{Case 2: False positives.} If there exists $v\in N_G^+[x_t']$ such that $h_t(v)=+1$, then the adversary will induce a false positive mistake that is observed at $v$ with true label $-1$, i.e., $(v_t,y_t)\triangleq (v,-1)$. However, we remark that the true features $x_t$ may be chosen as a different node in $N_G^-[v_t]$ to ensure realizability, which we will discuss in the reverse-engineering part. %
    We choose $x_{t+1}'$ to be the child of $x_t'$ along the false positive edge $(v,-1)$ in $\cT$.

    \textbf{Reverse-engineering.} 
    Repeating the above procedure for all $t\le d$ gives us the path $x_1' \xrightarrow{(v_1,y_1)} \cdots x_{d}' \xrightarrow{(v_{d},y_{d})} x_{d+1}'$, where each $y_t$ already specifies the true labels of each agent. It remains to select the initial features $(x_t)$. Since $\cT$ is shattered by $\cH$, the consistency part of \Cref{def:strategic-tree} guarantees the existence of $h\in\cH$ such that $\forall t<d$, there exists $x_t$ that satisfies $\tildeh_G(x_t)=y_t$, where $x_t=v_t=x_t'$ if $y_t=-1$ and $x_t\in N_G^-[v_t]$ if $y_t=+1$. We let those $(x_t)_{t\in[d]}$ be agents' true feature vectors. It then follows that the sequence of agents $S=(x_t,y_t)_{t\in[d]}$ is realizable under $(\cH,G)$ and indeed induces a mistake observed as $(v_t,y_t)$ at every round $t\in[d]$.
    
    Finally, if $\sdim(\cH,G)<\infty$, then the above argument with $d=\sdim(\cH,G)$ proves the theorem. When $\sdim(\cH,G)=\infty$, the above argument shows $\mistake_{\Alg}(\cH,G)\ge d$ for all $d\in\mathbb{N}$, which implies that $\mistake_{\Alg}(\cH,G)=\infty$ by driving $d\to\infty$. The proof is thus complete.
\end{proof}

\subsection{Proof of \Cref{thm:main-upper-bound}}
\label{app:upper-bound}
\medskip
\noindent\textbf{\Cref{thm:main-upper-bound}} (Restated)\textbf{.}\emph{
    The $\StrategicSOA$ algorithm (\Cref{algo:strategic-SOA}) achieves a maximal mistake bound of 
    $\mistake_{\StrategicSOA}(\cH,G)\le \sdim(\cH,G)$.
}

\begin{proof}[Proof of \Cref{thm:main-upper-bound}]
    It suffices to prove that if $\StrategicSOA$ makes a mistake at round $t$, then the strategic Littlestone dimension of version space $\cH_t$ (maintained by the $\StrategicSOA$ algorithm in Line 5) must decrease by at least 1, namely
    $\sdim(\cH_{t},G)\le \sdim(\cH_{t-1},G)-1$.
    For notational convenience, let $d=\sdim(\cH_{t-1},G)$.
    \paragraph{False positives.}
    We start with the case where $\StrategicSOA$ makes a false positive mistake, i.e., $h_t(v_t)=+1$ but $y_t=-1$. According to the definition of classifier $h_t$ and the update rule of version space $\cH_t$, we immediately obtain
    \[\sdim(\cH_{t},G)=\sdim\left((\cH_{t-1})_{G}^{(v_t,-1)},G\right)<\sdim(\cH_{t-1},G)\ \Rightarrow\ \sdim(\cH_{t},G)\le d-1.\]

    \paragraph{False negatives.}
    Then we consider the case where $\StrategicSOA$ makes a false negative mistake, i.e., $h_t(v_t)=-1$ but $y_t=+1$. 
    For the sake of contradiction, assume that the strategic Littlestone dimension does not decrease, i.e., $\sdim(\cH_{t},G)=\sdim\left((\cH_{t-1})_{G}^{(v_t,+1)},G \right)=d$. 
    This assumption implies that there exists a strategic Littlestone tree $\cT$ that is shattered by $(\cH_{t-1})_{G}^{(v_t,+1)}$ and of depth $\sdim(\cH_{t-1},G)$.

    Since the agent is classified as negative, it must be the case that the agent has not manipulated (i.e., $x_t=v_t$), and the entire outgoing neighborhood $N_G^+[x_t]$ is labeled as negative by $h_t$. 
    Therefore, according to the definition of $h_t$, for all $v\in N_G^+[x_t]$, we have $\sdim\left((\cH_{t-1})_{G}^{(v,-1)},G\right)=d$, which implies that there also exists a strategic Littlestone tree $\cT_v$ of depth $d$ that is shattered by $(\cH_{t-1})_{G}^{(v,-1)}$.  
    
    Now consider the tree $\cT'$ with root $x_t$, subtree $\cT$ on the false negative edge $(x_t,+1)$, and subtree $\cT_v$ on each false positive edge $(v,-1)$ for all $v\in N_G^+[x_t]$. Since we have argued that each subtree has depth $d$, the overall depth of $\cT'$ is $d+1$. We claim that $\cT'$ is shattered by $\cH_{t-1}$. In fact, for all root-to-leaf paths in $\cT'$, the first observation is guaranteed to be consistent with all hypotheses in the subclass for the subtree, and the consistency of each subtree ensures the existence of a hypothesis that makes all subsequent observations realizable.

    We have thus constructed a strategic Littlestone tree $\cT'$ that is shattered by $\cH_{t-1}$ and of depth $d+1$. However, this contradicts with the assumption that $\sdim(\cH_{t-1},G)=d$. Therefore, it must follow that $\sdim(\cH_{t},G)\le d-1=\sdim(\cH_{t-1},G)-1$, which in turn proves $\mistake_{\StrategicSOA}(\cH,G)\le \sdim(\cH,G)$.
\end{proof}

\subsection{Comparison with max-degree based bounds}
\label{app:improvement}
In this section, we consider two families of instances $(\cH,G)$ 
in which our mistake bound $\sdim(\cH,G)$ from \Cref{thm:main-upper-bound} significantly improves the previous bounds. We compare with upper bounds $O(\Delta_G^+\cdot\log|\cH|)$ from \citep{ahmadi2023fundamental} and $\Tilde{O}(\Delta_G^+\cdot\dim(\cH))$ from \citep{cohen2024learnability}, where $\dim(\cH)$ denotes the classical Littlestone dimension of hypothesis class $\cH$.
We focuses on comparing with the latter bound, since it can be shown that $\dim(\cH)\le\log|\cH|$ for all $\cH$.
\paragraph{Graphs with a large clique.} Our first example involves graphs with a very large clique. The main idea is that the densest part of graph may turn out to be very easy to learn since there are only a few effective hypotheses supported on it. On the other hand, the harder-to-learn part of the hypothesis class may be supported on a subgraph with a much smaller maximum degree. For this reason, the previous bounds that directly multiply the complexity of the entire graph (e.g., $\Deltaout$) with the complexity of the entire hypothesis class would be suboptimal.

Let $(G',\cH')$ be a pair of manipulation graph and hypothesis class, for which we have $\sdim(\cH,G')\le O(\Delta_{G'}^+\cdot\dim(\cH')$ since the strategic Littlestone dimension is a lower bound of all valid mistake bounds (\Cref{thm:main-lower-bound}). 
Let $N\gg \max\{|G'|,|\cH'|\}$ be a very large integer, and $K_N$ be a clique of size $N$. We assume that the vertex set of $K_N$ is disjoint from that of $G'$.
We take the hypothesis class on $K_N$ to be the set of all functions, i.e., $\{\pm1\}^{K_N}$.
Let $G=G'\cup K_N$ and $\cH=\cH'\times \{\pm1\}^N$. 
\squishlist
    \item \textbf{Previous bound:} 
    Since $\Deltaout=N$ and $\dim(\cH)\ge\dim(\cH')$. Therefore, the bound in \citep{cohen2024learnability} is of order (ignoring logarithmic factors)
    \[
    \Deltaout\cdot\dim(\cH)\ge \Omega(N\cdot\dim(\cH')).
    \]
    \item \textbf{Our bound based on the strategic Littlestone dimension:} 
    By \Cref{thm:main-lower-bound}, $\sdim(\cH,G)$ lower bounds the mistake bound achievable by any deterministic algorithm. Consider the following deterministic algorithm that uses two  independent algorithms $\Alg_1$ and $\Alg_2$ to learn on each of the disjoint subgraphs $\cG'$ and $K_N$. 
    We will choose $\Alg_1$ to be the Red2Online-PMF($\SOA$) algorithm proposed by \citet{cohen2024learnability}, and $\Alg_2$ be the algorithm that predicts all nodes negative until a mistake happens, at which point flips the prediction to positive on all nodes. Since the effective classifiers on $K_N$ is either all positive or all negative, $\Alg_2$ makes at most 1 mistake.
    We have
    \[
    \sdim(\cH,G)\le\mistake_{\Alg_1}(\cH',G')+\mistake_{\Alg_2}(\{\pm1\}^{K_N},K_N)\le \Tilde{O}(\Delta_{G'}^+\cdot\dim(\cH')).
    \]
    \item \textbf{Improvement.} As a result, for this instance, the gap between these two algorithms is lower bounded by $N/\Delta_{G'}^+$, which can be arbitrarily large by taking $N\to\infty$.
\squishend

\paragraph{Random graphs in $G(n,p)$, $\cH=\{\pm1\}^{n}$.}
Our second example considers random graphs $G\sim G(n,p)$, in which every (undirected) edge is realized independently with probability $p$. We will show that when $p=\omega(1/n)$ (when the random graph is ``effectively'' dense), with high probability over $G\sim G(n,p)$, the strategic Littlestone dimension significantly improves previous bounds.

Since $\cH$ is extremely expressive (contains all functions), we have $\log|\cH|=\dim(\cH)=n$. However, we will show that even after strengthening the previous bounds by applying it on the reduced-size hypothesis class $\tildeH_G\subseteq \cH$---which contains only one hypothesis in each equivalence class that induces the same effective hypothesis $\tildeh_G$---the strategic Littlestone dimension $\sdim(\cH,G)$ still offers significant improvement over $\Deltaout\cdot \dim(\tildeH_G)$.

\squishlist
    \item \textbf{Previous bound:} By concentration, $\Deltaout\ge \Omega(np)$ with high probability. 
    Moreover, with high probability, the independence number $\alpha(G)$ satisfies $\alpha(G)\ge\Omega(\log(np)/p)$~\citep{frieze1990independence}.
    Let $I(G)$ be the independent set with size $\alpha(G)$ and consider the projection of $\tildeH_G$ onto $I(G)$. Since there are no edges inside, the effective hypothesis coincides with the original hypothesis, therefore $(\tildeH_G)_{I(G)}$ contains all functions that maps from $I(G)$ to $\{\pm1\}$, which has classical Littlestone dimension $|I(G)|=\alpha(G)$. As a result, we have that with high probability,
    \[
    \Deltaout\cdot\dim(\tildeH_G)\ge \Omega(np)\cdot
    \Omega(\log (np)/p)\ge\Omega(n\log (np)).
    \]
    
    \item \textbf{Our bound based on the strategic Littlestone dimension:} Again, $\sdim(\cH,G)$ lower bounds the mistake bound of all deterministic algorithms. Consider the following algorithm: start with predicting all nodes as positive. Whenever a false positive is observed at some node $u$, flip the sign of $u$ to negative. Such an algorithm achieves a mistake bound of $n$.
    Therefore,
    \[\sdim(\cH,G)\le n.\]
    \item \textbf{Improvement} When $p=\omega(\frac{1}{n})$, with high probability, the gap between these two bounds are
    \[
    \Omega(np)=\omega_n(1),
    \]
    which can be made to approach $\infty$ as $n\to\infty$.
\squishend

\section{Supplementary Materials for Section~\ref{sec:agnostic}}
\label{app:agnostic}

\subsection{Proof of \Cref{lemma:key}}
\medskip
\noindent\textbf{\Cref{lemma:key}} (Restated)\textbf{.}\emph{
    For any hypothesis $ h\in\cH$ and any sequence of agents $S$, 
    there exists an expert $\mathfrak{e}_h\in\experts$ that makes at most $\sdim(\cH,G)$ more mistakes than $h$.
}
\begin{proof}[Proof of \Cref{lemma:key}]
    Let us define the hypothetical sequence $S^{(h)}\triangleq (x_t,y_t^{(h)})_{t\in[T]}$, where we keep the same sequence of initial feature vectors ($x_t$) in $S$, but adjust their labels to be $y_t^{(h)}\triangleq\tildeh_G(x_t)$, i.e., the label that the effective classifier $\tildeh_G$ assigns to $x_t$. 
    Note that this sequence is defined only for analytical purpose and not required to be known by either the agnostic algorithm or the experts.
    
    By definition, $S^{(h)}$ is realizable by the hypothesis $h\in\cH$ under graph $G$. Therefore, \Cref{thm:main-upper-bound} guarantees that runnning $\StrategicSOA$ on $S^{(h)}$ gives at most $\sdim(\cH,G)$ mistakes. Let $m\le\sdim(\cH,G)$ be the number of mistakes made, and $i_1,i_2,\cdots,i_m$ be the time steps at which the mistakes occur.
    In addition, at every mistake $i_k=t\in[T]$, 
    let $v_t^{(h)}$ be the post-manipulation node observed by the $\StrategicSOA$ algorithm running on sequence $S^{(h)}$.
    On the other hand,
    let $v_t$ be the observation received by each expert. Although $v_t$ may be different from $v_t^{(h)}$ because $v_t$ is the best response to the agnostic algorithm while $v_t^{(h)}$ is the best response to $\StrategicSOA$, we know that $v_t$ must be an out-neighbor of $x_t$. 
    Therefore, there must exist an index $r_k$ (where $0\le r_k\le\Deltain$) such that $x_t$ is the $r_t$-th in-neighbor of $v_t$.
    We argue that $\expert(i_{1:m},r_{1:m})$ is the expert $\mathfrak{e}_h$ that we want.

    We first establish the equivalence of the two following instances of $\StrategicSOA$: 
    \squishlist
    \item $\SSOAexpert$ denotes the algorithm instance simulated by expert $\mathfrak{e}_h$;
    \item $\SSOAh$ denotes the algorithm instance running on sequence $S^{(h)}$.
    \squishend
    We will show by induction that both instances $\SSOAexpert$ and $\SSOAh$ have the same version space---and as a result, output the same classifier for the next round---at all time steps. This is clearly true at the base case $t=1$, as the version spaces of both instances are initialized to be $\cH$. 
    Now we assume the two instances are equivalent up to $t-1$ and prove that they are still equivalent at time $t$.
    Since they have the same version spaces $\cH_{t-1}$, they output the same classifiers for time step $t$. We denote this classifier by $\hat{h}_t$ as in line 2 of \Cref{algo:expert}.
    
    If $t\notin \{i_1,\cdots,i_m\}$, then the version space $\cH_t$ are still the same because neither instances update. Otherwise, there exists $k\in[m]$ such that $t=i_k$. Since $\SSOAh$ makes a mistake at $i_k$, it will update the version space with observation $(v_t^{(h)},y_t^{(h)})=(v_t^{(h)},-\hat{h}_t(v_t^{(h)}))$.
    On the other hand, according to line 7 of \Cref{algo:expert}, the instance $\SSOAexpert$ is updated using observation $(\hat{v}_t,-\hat{h}_t(\hat{v}_t))$. Therefore, it suffices to show that $v_t^{(h)}=\hat{v}_t$. Since our choice of $r_k$ guarantees $x_t$ to be the $r_k$-th in-neighbor of $v_t$, we have $\hat{x}_t=x_t$ based on line 5 of \Cref{algo:expert}. Therefore, both $v_t^{(h)}$ and $\hat{v}_t$ are equal to $\br_{\hat{h}_t,G}(x_t)$, so they are the same. As a result, both $\SSOAexpert$ and $\SSOAh$ updates their version space using the same observation, so their $\cH_{t}$ remains the same. By induction, these two instances are equivalent for all time steps.

    Finally, we use the equivalence established above to prove the lemma. Using $(\hat{h}_t)_{t\in[T]}$ to denote the sequence of classifiers outputted by $\mathfrak{e}_h$, we have
    \begin{align*}
    \mistake_{\mathfrak{e}_h}(S)-\mistake_{h}(S)&=
        \sum_{t=1}^T \indicator{\hat{h}_t(\br_{\hat{h}_t,G}(x_t))\neq y_t}
        -\sum_{t=1}^T \indicator{h(\br_{h,G}(x_t))\neq y_t}\\
        &\le\sum_{t=1}^T \indicator{\hat{h}_t(\br_{\hat{h}_t,G}(x_t))\neq h(\br_{h,G}(x_t))}\\
        &=\sum_{t=1}^T \indicator{\hat{h}_t(\br_{\hat{h}_t,G}(x_t))\neq y_t^{(h)}}\tag{$y_t^{(h)}=\tildeh_G(x_t)$ in $S^{(h)}$}\\
        &=\mistake_{\StrategicSOA}(S^{(h)})
        \tag{Equivalence of $\SSOAh$ and $\SSOAexpert$}\\
        &\le \sdim(\cH,G).
        \tag{$S^{(h)}$ is realizable under $\cH$ and $G$}
    \end{align*}
    The proof of the lemma is thus complete.
\end{proof}

\subsection{The Biased Weighted Majority Vote Algorithm}

We present the algorithm in \Cref{algo:biased-weighted-maj}.
\begin{algorithm}
    \SetKwInOut{Init}{Initialization}
    \caption{Biased Weighted Majority Vote~\citep{ahmadi2023fundamental}}
    \label[algorithm]{algo:biased-weighted-maj}
    \KwIn{Expert class $\experts$, manipulation graph $G(\cX,\cE)$ that is a supergraph of the true (unknown) manipulation graph $\Gstar$%
    .}
    \Init{For all experts $ \mathfrak{e}\in\experts$, set weight $w_0(\mathfrak{e})\gets1$. %
    }
    \For{$t\in[T]$}{
        \tcc{the learner commits to a classifier $h_t$ that is constructed as follows:}
    \For{$v\in \cX$}{
        Let $W_t^+(v) = \sum_{\mathfrak{e}\in\experts:\mathfrak{e}_t(v)=+1}w_t(\mathfrak{e})$, $W_t^-(v) = \sum_{\mathfrak{e}\in\experts:\mathfrak{e}_t(v)=-1}w_t(\mathfrak{e})$, and $W_t = W_t^+(v)+W_t^-(v) = \sum_{\mathfrak{e}\in\experts}w_t(\mathfrak{e})$;
        \tcp{$\eexpert_t$ is the prediction of expert $\eexpert$ at round $t$}
        \eIf{$W_t^+(v)\geq W_t/(\Delta^+_{G}+2)$}{
            $h_t(v)\leftarrow +1$\;
        }
        {
            $h_t(v)\leftarrow -1$\;
        }
    }
    Observe manipulated node $v_t$ and output prediction $h_t(v_t)$\;
    \If{$h_t(v_t)\neq y_t$}{
    \tcc{If there was a mistake:}
        \eIf{$y_t=-1$}{
            \tcc{False positive mistake.}
            $\experts'\leftarrow \{\mathfrak{e}\in \experts\mid \mathfrak{e}_t(v_t)=+1\}$;
            \tcp{
            penalize the experts that label $v_t$ as positive.}
        }        
        {
            \tcc{False negative mistake.}
            $\hat{N}[v_t]\leftarrow N^+_G[v_t]\setminus\{x\in N^+_G[v_t], h_t(x)=+1\}$\;
            $\experts'\leftarrow \{\mathfrak{e}\in \experts\mid \forall x\in \hat{N}[v_t], \mathfrak{e}_t(x)=-1\}$\;
            \tcp{
                penalize the experts that label all nodes in $\hat{N}[v_t]$ as negative.}
        }
        if $\mathfrak{e}\in \experts'$, then $w_{t+1}(\mathfrak{e})\leftarrow\gamma\cdot w_t(\mathfrak{e})$;
            otherwise, $w_{t+1}(\mathfrak{e})\gets w_t(\mathfrak{e})$\;
    }
    }
\end{algorithm}

\subsection{Proof of \Cref{thm:agnostic}}

\medskip
\noindent\textbf{\Cref{thm:agnostic}} (Restated)\textbf{.}\emph{
For any adaptive adversarial sequence $S$ of length $T$, the {Agnostic Online Strategic Classification} algorithm (\Cref{algo:agnostic}) has regret bound
    \[
        \regret(S,\cH,G)\le O\left(
            \Delta_G^+\cdot\OPT+\Delta_G^+\cdot\sdim(\cH,G)\cdot(\log T+\log\Delta_G^-)
        \right),
    \]
    where $\Delta_G^+$ (resp. $\Delta_G^-$) denotes the maximum out-degree (resp. in-degree) of graph $G$.
}

\begin{proof}[Proof of \Cref{thm:agnostic}]
As shown in the proof sketch, combing the guarantee of \Cref{algo:biased-weighted-maj} and bound on $|\experts|$ gives
\begin{align*}
    \mistake(S,\cH,G)\le \Delta_G^+\cdot\OPTexperts+\Delta_G^+\cdot\log|\experts|\lesssim \OPTexperts+\Delta_G^+\cdot d(\log T+\log\Deltain),
\end{align*}
where $\OPTexperts$ denotes the optimal number of mistakes made by the best expert in $\experts$.

    Applying \Cref{lemma:key} to the best hypothesis in hindsight $\hst\in\cH$ shows that there exists $\mathfrak{e}_{\hst}\in\experts$ that makes no more than $\OPT+d$ mistakes, which further implies $\OPTexperts\le\OPT+d$. Hence, we have
    \begin{align*}
        \mistake(S,\cH,G)
        \lesssim \Delta_G^+\cdot(\OPT+d\log T+d\log\Deltain).\qedhere
    \end{align*}
\end{proof}

\section{Supplementary Materials for Section~\ref{sec:unknown-graph}}
\label{app:unknown-graph}

\subsection{Realizable graph classes}

\medskip
\noindent\textbf{\Cref{thm:graph-realizable}} (Restated)\textbf{.}\emph{
    For any realizable graph class $\cG$ and any adaptive adversarial sequence $S$ of length $T$, \Cref{algo:graph-realizable} has regret bound
    \[
        \regret(S,\cH,G)\le O\left(
            \Deltaunion\cdot\left(\OPTH+d_{\cG}\cdot(\log T+\log\Deltamax)+\log|\cG|\right)
        \right),
    \]
    where
    $d_{\cG}\triangleq\max_{G\in\cG}\sdim(\cH,G)$ is the maximum strategic Littlestone dimension for all graphs in $\cG$,
    $\Deltaunion\triangleq \Delta_{\Gunion}^+$ is the maximum out-degree of $\Gunion$ (i.e., the union of graphs in $\cG$), and $\Deltamax\triangleq\max_{G\in\cG}\Deltain$ is the maximum max in-degree over graphs in $G$.
}

\begin{proof}[Proof of \Cref{thm:graph-realizable}]
    For each $G\in\cG$, we use $\experts_G$ to denote the subset of experts constructed in \Cref{algo:graph-realizable} for graph $G$. We also use $\experts\triangleq\cup_{G\in\cG}\experts_G$ to denote the set of all experts.

    To prove this theorem, we first revisit the regret guarantee for \Cref{algo:biased-weighted-maj} in \Cref{lemma:biased-maj-strong}, especially when the input graph $G$ does not match the actual graph $\Gstar$. The proof of \Cref{lemma:biased-maj-strong} largely follows from \citep{ahmadi2023fundamental}, but we include it in the end of this section for completeness.
    \begin{lemma}[Regret of \Cref{algo:biased-weighted-maj}~\citep{ahmadi2023fundamental}]
    \label[lemma]{lemma:biased-maj-strong}
    
        If \Cref{algo:biased-weighted-maj} is called on manipulation graph $G$ that includes all the edges in the actual manipulation graph $\Gstar$, then the number of mistakes is upper bounded as follows:
        \[
        \mistake(\cH,\Gstar)\le O\left(\Deltaout\cdot\OPTexperts_{\Gstar}
        +\Deltaout\cdot\log|\experts|\right),
        \]
        where $\Deltaout$ is the maximum out-degree of graph $G$, and $\OPTexperts_{\Gstar}$ is the minimum number of mistakes made by the optimal expert under graph $\Gstar$.
    \end{lemma}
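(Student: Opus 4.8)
The plan is to run the standard weighted-majority potential argument on the aggregate weight $W_t=\sum_{\eexpert\in\experts}w_t(\eexpert)$, with both invariants carefully tailored to the mismatch $\Gstar\subseteq G$. I would establish two facts: (i) every mistake of the learner forces $W_t$ to contract by a constant multiplicative factor governed by $\Deltaout=\Delta_G^+$; and (ii) the optimal expert $\optexpert$ (the one attaining $\OPTexperts_{\Gstar}$) is penalized only on rounds where it itself errs on the agent under $\Gstar$, so its weight never drops below $\gamma^{\OPTexperts_{\Gstar}}$. Chaining these and solving for the mistake count $M$ gives the bound.

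For invariant (i) I would split on the mistake type. On a false positive ($y_t=-1$, $h_t(v_t)=+1$), the labeling rule gives $W_t^+(v_t)\ge W_t/(\Deltaout+2)$, and precisely this mass is penalized, so $W_{t+1}\le\bigl(1-\tfrac{1-\gamma}{\Deltaout+2}\bigr)W_t$. The false-negative case ($y_t=+1$, $h_t(v_t)=-1$) is the delicate one. Since the agent best-responds under $\Gstar$ yet fails to reach a positive node, we must have $v_t=x_t$ and every node of $N_{\Gstar}^+[v_t]$ labeled negative by $h_t$; because $G\supseteq\Gstar$, this whole neighborhood lies inside the set $\hat N[v_t]$ the algorithm carves out of $N_G^+[v_t]$. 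Each $x\in\hat N[v_t]$ is labeled negative, so $W_t^+(x)<W_t/(\Deltaout+2)$; a union bound over the at most $\Deltaout+1$ such nodes shows the penalized set (experts labeling all of $\hat N[v_t]$ negative) still carries weight at least $W_t-(\Deltaout+1)\cdot W_t/(\Deltaout+2)=W_t/(\Deltaout+2)$, yielding the same contraction.

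For invariant (ii) I would verify that penalizing $\optexpert$ forces a genuine $\Gstar$-mistake of $\optexpert$. On a false positive, $\optexpert$ is penalized only if $\optexpert_t(v_t)=+1$; since $v_t=\br_{\Gstar,h_t}(x_t)\in N_{\Gstar}^+[x_t]$, the agent best-responding to $\optexpert_t$ under $\Gstar$ could also reach the positive node $v_t$, so $\optexpert$ classifies $x_t$ as $+1$ against the true label $-1$. On a false negative, $\optexpert$ is penalized only if it labels all of $\hat N[v_t]\supseteq N_{\Gstar}^+[v_t]$ negative; since $v_t=x_t$, the agent's entire $\Gstar$-neighborhood is then negative under $\optexpert_t$, so $\optexpert$ classifies it as $-1$ against the true label $+1$. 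Either way $\optexpert$ errs under $\Gstar$, so it is penalized at most $\OPTexperts_{\Gstar}$ times and $w_T(\optexpert)\ge\gamma^{\OPTexperts_{\Gstar}}$.

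To finish, I would combine the two invariants through $\gamma^{\OPTexperts_{\Gstar}}\le w_T(\optexpert)\le W_T\le|\experts|\cdot\bigl(1-\tfrac{1-\gamma}{\Deltaout+2}\bigr)^{M}$, take logarithms, and apply $-\ln(1-z)\ge z$ to obtain
\[
M\le\frac{\Deltaout+2}{1-\gamma}\Bigl(\ln|\experts|+\OPTexperts_{\Gstar}\cdot\ln(1/\gamma)\Bigr).
\]
Fixing $\gamma$ to a constant (say $\gamma=1/2$) then gives $M=O\bigl(\Deltaout\cdot\OPTexperts_{\Gstar}+\Deltaout\cdot\log|\experts|\bigr)$, as claimed. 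The main obstacle, and the only place the argument genuinely departs from the non-strategic weighted-majority proof, is the false-negative contraction step: one must show that building the penalty set from the possibly oversized supergraph neighborhood $N_G^+[v_t]$ simultaneously (a) still captures a $1/(\Deltaout+2)$ fraction of the weight, using $\Deltaout=\Delta_G^+$, and (b) contains the true $\Gstar$-neighborhood, so that invariants (i) and (ii) coexist despite $G\neq\Gstar$.
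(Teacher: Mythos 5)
Your proposal is correct and follows essentially the same route as the paper's proof: the same two invariants (constant-factor contraction of the total weight on every learner mistake, and the optimal expert being penalized only on its own $\Gstar$-mistakes), the same case split on mistake type, and the same union-bound argument over $\hat N[v_t]\subseteq N_G^+[v_t]$ together with the containment $N_{\Gstar}^+[v_t]\subseteq\hat N[v_t]$ to make both invariants coexist under $G\supseteq\Gstar$. Your bookkeeping of the contraction factor as $(1-\gamma)/(\Deltaout+2)$ is in fact slightly more careful than the paper's, but this affects only constants and the final bound is the same.
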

    Since $\Gunion$ contains all edges in any $G\in\cG$ and thus the unknown $\Gstar$, \Cref{lemma:biased-maj-strong} that the number of mistakes made by \Cref{algo:graph-realizable} is at most
    \begin{align*}
        \Deltaunion\cdot\OPTexperts_{\Gstar}+\Deltaunion\cdot\log|\experts|
        \le \Deltaunion\cdot\OPTexperts_{\Gstar}+\Deltaunion\cdot \left(
            d_{\cG}\cdot\log(T\Deltamax)+\log|\cG|
        \right),
    \end{align*}
    where the second step uses the following upper bound on the number of experts:
    \begin{align*}
        |\experts|&\le \sum_{G\in\cG}\sum_{m\le \sdim(\cH,G)}\binom{T}{m}\cdot (\Deltain)^m\\
        &\le |\cG| \cdot\sum_{m\le d_{\cG}}\binom{T}{m}\cdot (\Deltamax)^m
        \tag{$\forall G\in\cG,\ \sdim(\cH,G)\le d_{\cG},\Deltain\le\Deltamax$}\\
        &\lesssim |\cG|\cdot (T\cdot\Deltamax)^{d_{\cG}+1}.
    \end{align*}

    Therefore, it remains to show that $\OPTexperts_{\Gstar}\le \OPTH+d_{\cG}$.
    To this end, we apply \Cref{lemma:key} the expert class $\experts_{\Gstar}$, in which all experts have the correct belief about the manipulation graph $\Gstar$. For the hypothesis $\hst\in\cH$ that achieves $\OPTH$ under $\Gstar$, there must exist $\eexpert_{h,\Gstar}\in\experts_{\Gstar}\subseteq\experts$ such that $\eexpert_{h,\Gstar}$ makes at most $\sdim(\cH,\Gstar)\le d_{\cG}$ more mistakes than $\hst$ under $\Gstar$. We have thus proved that $\OPTexperts_{\Gstar}\le \OPTH+d_{\cG}$, which in turn establishes the theorem.
\end{proof}
\begin{proof}[Proof of \Cref{lemma:biased-maj-strong}]
Suppose a mistake is made in round $t$, we show the following claims hold:
\squishlist
\item The total weights decrease by at least constant fraction: $W_{t+1}\leq W_t\big(1-\gamma/(\Delta^+_G+2)\big)$ where $G$ is the input graph.
\item 
The algorithm penalizes experts only if it makes a mistake on $\Gstar$.
\squishend
To prove these claims, we consider the following two types of mistakes.

\paragraph{False positive.} Suppose $h_t(v_t)$ is positive but the true label $y_t$ is negative. According to the algorithm, $h_t$ labels $v_t$ positive only when the total weight of experts predicting positive on $v_t$ is at least $W_t/(\Delta^+_{G}+2)$. Moreover, each of their weights is decreased by a factor of $\gamma$. As a result, we have $W_{t+1}\leq W_t\big(1-\gamma/(\Delta^+_G+2)\big)$ and the first claim holds.

For the second claim, note that the algorithm only penalize experts $\eexpert$ where $\eexpert_t(v_t)=+1$. Since $v_t\in N_{\Gstar}^+[x_t]$, this implies $\eexpert_t(\br_{\eexpert_t,\Gstar}(x_t))=+1$, whereas $y_t=-1$. In other words, the experts penalized must have made a mistake under $\Gstar$.

\paragraph{False negative.} 
In the case of a false negative, the agent has not moved from a different location to $v_t$ to get classified as negative, so $v_t=x_t$. Since the agent did not move, none of the vertices in $N_{G^{\star}}^+[v_t]$ was labeled positive by the algorithm. %
However, there might exist some vertices in $N^+_{G}[v_t]\setminus N^+_{G^{\star}}[v_t]$ that are labeled as positive by the algorithm.
Let $\hat{N}[v_t]$ denote the set that includes all vertices in $N^+_{G}[v_t]$ that are labeled as negative by $h_t$, we have
\[
    N^+_{G^{\star}}[v_t]\subseteq \hat{N}[v_t] \subseteq N^+_{G}[v_t].
\] 
According to the algorithm, for each $x\in \hat{N}[v_t]$, the total weight of experts predicting $x$ as positive is less than $W_t/(\Delta^+_G+2)$ where $\Delta^+_G$ is the maximum out-degree of $G$. Therefore, taking the union over all $x\in \hat{N}[v_t]$, it implies that the total weight of experts predicting negative on all $x\in \hat{N}[v_t]$ is at least 
\[W_t\Big(1-|\hat{N}[v_t]|/(\Delta^+_G+2)\Big)\geq W_t\Big(1-(\Delta^+_G+1)/(\Delta^+_G+2)\Big)=W_t/(\Delta^+_G+2),\]
where the inequality comes from $\hat{N}[v_t] \subseteq N^+_{G}[v_t]$. %
Reducing their weights by a factor of $\gamma$ results in $W_{t+1}\leq W_t-(\gamma W_t)/(\Delta^+_G+2)$. The first claim holds true.

As for the second claim, if an expert $\eexpert$ is penalized, then $\eexpert_t(x)=-1$ for all $x\in \hat{N}[x_t]$. Since $N_{\Gstar}^+[x_t]\subseteq\hat{N}[x_t]$, $\eexpert_t$ must label all nodes in $N_{\Gstar}^+[x_t]$ as negative. In other words, $\eexpert_t(\br_{\eexpert_t,\Gstar}(x_t))=-1$, which means that $\eexpert$ must have made a mistake under $\Gstar$. The second claim holds.

\paragraph{Regret analysis. }
Let $M=\mistake(\cH,\Gstar)$ denote the number of mistakes made by the algorithm. Since the initial weights are all set to 1, we have $W_0=|\experts|$. %
The first claim implies that $W_{t+1}\leq W_t\left(1-\frac{\gamma}{\Delta^+_{G}+2}\right)$.
Therefore, $W_T\leq |\experts|\left(1-\frac{\gamma}{\Delta^+_G+2}\right)^M$.

On the other hand, we use the second claim to show that $W_T\ge \gamma^{\OPTexperts_{\Gstar}}$. We have proved that whenever the algorithm decreases the weight of an expert, they must have made a mistake on $\Gstar$. 
Let $\mathfrak{e}^\star\in\experts$ denote the best expert that achieves the minimum number of mistakes $\OPTexperts_{\Gstar}$ under $\Gstar$. From our argument above, the weight of $\eexpert^\star$ is penalized by no more than $ \OPTexperts_{\Gstar}$ times. Therefore, after $T$ rounds, $W_T\ge w_T(\mathfrak{e}^\star)\ge \gamma^{\OPTexperts_{\Gstar}}$ where the second inequality holds since $0<\gamma<1$.
Finally, we have:
\begin{align*}
&\gamma^{\OPTexperts_{\Gstar}}\leq W_T\leq |\experts|\left(1-\frac{\gamma}{\Delta^+_G+2}\right)^M\\
\Rightarrow\ &\OPTexperts_{\Gstar}\cdot\ln{\gamma}\leq \ln{|\experts|}+M\ln{\Big(1-\frac{\gamma}{\Delta^+_G+2}\Big)}\leq \ln{|\experts|}-M\frac{\gamma}{\Delta^+_G+2}\\
\Rightarrow\ & M\leq \frac{\Delta^+_G+2}{\gamma}\ln{|\experts|}-\frac{\ln{\gamma}(\Delta^+_G+2)}{\gamma}\OPTexperts_{\Gstar}\\
\end{align*}
By setting $\gamma=1/e$, we bound the total number of mistakes as $M\leq e(\Delta^+_G+2)(\ln{|\experts|}+\OPTexperts_{\Gstar})$.
\end{proof}

\subsection{Agnostic graph classes}
\label{app:agnostic-graph}

Before presenting the algorithm in the setting of agnostic graph classes, we first introduce an indexing system to the in-neighborhoods of $\Gunion$, which is constructed in the same way as described in \Cref{sec:agnostic}. These indices whill be in the range $\{0,1,\cdots,\Deltaunionin\}$ where $\Deltaunionin$ is the maximum in-degree of graph $\Gunion$. We now present the algorithm in \Cref{algo:graph-agnostic,algo:expert-agnostic-graph} and prove its regret bound in \Cref{thm:graph-agnostic}.

\begin{algorithm}
    \SetKwInOut{Init}{Initialization}
    \caption{Online Strategic Classification For Agnostic Graph Class}
    \label[algorithm]{algo:graph-agnostic}
    \KwIn{Hypothesis class $\cH$, graph class $\cG$, an upper bound $N$ that satisfies $\OPTG\le N$.}
    Let $\Gunion\gets(\cX,\sum_{G\in\cG}\cE_{G})$ be the union of graphs in $\cG$, $\Deltaunionin\gets\Delta_{\Gunion}^-$\;
    \ForEach{$G\in\cG$}{
        Let $d_G\gets\sdim(\cH,G)$\;
        \ForEach{$m\le d_G,i_{1:m}\in\binom{T}{m},r_{1:m}\in[\Deltain]^m,n\le N,
        i_{1:n}'\in\binom{T}{n},r_{1:n}'\in[\Deltaunionin]^n$}{
            Construct $\expert(i_{1:m},r_{1:m},i_{1:n}',r_{1:n}';G)$ as in \Cref{algo:expert-agnostic-graph}.
        }
    }
    Run \emph{Biased Weighted Majority Vote} (\Cref{algo:biased-weighted-maj}) on the set of experts under graph $\Gunion$.
\end{algorithm}

\begin{algorithm}
    \SetKwInOut{Init}{Initialization}
    \caption{$\expert(i_{1:m},r_{1:m},i'_{1:n},r_{1:n}';G)$}
    \label[algorithm]{algo:expert-agnostic-graph}
    \KwIn{Hypothesis class $\cH$, manipulation graph $G$, indices for mistakes $1\le i_1<\cdots<i_m\le T$, 
    indices for manipulation directions $0\le r_1,\cdots,r_n\le\Delta_G^-$, 
    indices for imperfect graphs $1\le i_1'<\cdots<i'_n\le T$,
    and manipulation directions $0\le r_1',\cdots,r_n'\le\Deltaunionin$, 
    \\\qquad\quad the sequence of post-manipulation agents $(v_t,y_t)_{t\in[T]}$ received sequentially.}
    \KwOut{Classifiers $(\hat{h}_t)_{t\in[T]}$ outputted sequentially.}
    \Init{Simulate an instance of the $\StrategicSOA$ algorithm with parameters $(\cH,G)$.}
    \For{$t\in[T]$}{
        $\hat{h}_t\gets$ classifier outputted by the $\StrategicSOA$ algorithm\;
        Observe the manipulated feature vector $v_t$ and the true label $y_t$\;
        \If{ $t\in\{i_1,\cdots,i_m\}$ (suppose $t=i_k$)}{
        \tcc{Guess where the original node $x_t$ comes from}
            \uIf{
            $t\in\{i_1',\cdots,i_n'\}$ (suppose $t=i'_{s}$)
            }{
            $\hat{x}_t\gets $ the in-neighbor in $N_{\Gunion}^-[v_t]$ with index $r'_s$\tcp{when $G_t\neq G$, we have $G_t\subseteq\Gunion$}
            }
            \Else{
            $\hat{x}_t\gets $ the in-neighbor in $N_G^-[v_t]$ with index $r_k$\tcp{when $G_t=G$}
            }
            $\hat{v}_t\gets \br_{\hat{h}_t,G}(\hat{x}_t)$\tcp{simulate the post-manipulation feature vector in response to $\hat{h}_t$}
            Update the $\StrategicSOA$ algorithm with instance $(\hat{v}_t,- \hat{h}_t(\hat{v}_t))$. 
        }
    }
\end{algorithm}

\begin{theorem}
    \label{thm:graph-agnostic}
    For any graph class $\cG$ and hypothesis class $\cH$, any adaptive adversarial sequence $S$ of length $T$, and any integer $N$ that is a valid upper bound on $\OPTG$, \Cref{algo:graph-agnostic} has regret bound
    \[
        \regret(S,\cH,G)\le O\left(
            \Deltaunion\cdot\left(\OPTH+(d_{\cG}+N)\cdot(\log T+\log\Deltamax)+\log|\cG|\right)
        \right),
    \]
    where
    $d_{\cG}\triangleq\max_{G\in\cG}\sdim(\cH,G)$ is the maximum strategic Littlestone dimension for all graphs in $\cG$, and
    $\Deltaunion$ (resp. $\Deltaunionin$) is the maximum out-degree (resp. in-degree) of $\Gunion$, where $\Gunion$ is the union of $\cG$ that contains edges from all graphs in $\cG$.
\end{theorem}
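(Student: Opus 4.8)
\emph{Proof plan.} I will follow the two-step template of \Cref{thm:graph-realizable}: reduce the learner's mistakes to the performance of a single fixed expert via the biased-majority guarantee, and then exhibit one expert in $\experts$ that is competitive with $\hst$. The genuinely new feature is that the round-$t$ agent best-responds according to an \emph{arbitrary} $G_t\subseteq\Gunion$ rather than a single graph, so I must control the at most $\OPTG\le N$ rounds on which the benchmark graph $\Gstar$ mis-describes the true neighborhood, both inside the voting analysis and inside the expert construction. Since $\regret$ is at most the total mistake count and $\OPTH$ appears on the right-hand side, it suffices to bound the learner's mistakes.

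\textbf{Per-round voting guarantee.} First I would re-run the proof of \Cref{lemma:biased-maj-strong} with the per-round graphs $G_t$ in place of a single $\Gstar$, while the voting graph stays $\Gunion$. The weight-decrease claim is unchanged since it only uses $\Delta^+_{\Gunion}=\Deltaunion$. The ``penalize only on a genuine mistake'' claim survives round-by-round because $G_t\subseteq\Gunion$: on a false positive, a penalized expert labels $v_t\in N^+_{G_t}[x_t]$ positive and hence errs under $G_t$; on a false negative $v_t=x_t$ and $N^+_{G_t}[v_t]\subseteq N^+_{\Gunion}[v_t]$ lies inside the un-flipped set $\hat N[v_t]$, so a penalized expert labels all of $N^+_{G_t}[x_t]$ negative and again errs under $G_t$. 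This yields $\mistake\le O\!\left(\Deltaunion\bigl(M^\star+\log|\experts|\bigr)\right)$, where $M^\star$ is the number of mistakes made by any single fixed expert against the true $G_t$-dynamics.

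\textbf{The representative expert (main obstacle).} Fix $\Gstar\in\cG$ achieving $\OPTG$ and $\hst\in\cH$ achieving $\OPTH$. Consider the relabeled sequence $(x_t,\tildeh^\star_{\Gstar}(x_t))_t$, which is realizable under $(\hst,\Gstar)$; hence $\StrategicSOA(\cH,\Gstar)$ run on it errs on at most $\sdim(\cH,\Gstar)\le d_{\cG}$ rounds $i_{1:m}$. Let $i'_{1:n}\subseteq i_{1:m}$ be the mistake rounds that are ``mismatched,'' i.e.\ $N^+_{G_t}[x_t]\ne N^+_{\Gstar}[x_t]$; since there are at most $\OPTG\le N$ mismatched rounds in all, $n\le N$. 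The claimed expert is $\expert(i_{1:m},r_{1:m},i'_{1:n},r'_{1:n};\Gstar)$, with the direction indices chosen so that each guess equals the true $x_t$: at a matched mistake round $v_t\in N^+_{\Gstar}[x_t]$, so $x_t\in N^-_{\Gstar}[v_t]$ and some index $r_k$ recovers it; at a mismatched mistake round only $x_t\in N^-_{\Gunion}[v_t]$ is guaranteed, which is precisely why \Cref{algo:expert-agnostic-graph} switches to $N^-_{\Gunion}[v_t]$, where $r'_s$ recovers it. With these indices, the induction of \Cref{lemma:key} shows the expert's internal $\StrategicSOA$ stays version-space-identical to $\StrategicSOA(\cH,\Gstar)$ on the relabeled sequence, since both feed the update $\bigl(\br_{\Gstar,\hat h_t}(x_t),\,-\hat h_t(\br_{\Gstar,\hat h_t}(x_t))\bigr)$ exactly on $i_{1:m}$ and idle elsewhere.

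\textbf{Counting and combining.} It remains to bound $M^\star$ for this expert. On matched rounds its best response under $G_t$ coincides with its best response under $\Gstar$, so its played prediction $z$ on $x_t$ equals the counterfactual-SOA prediction; it therefore errs only when the SOA errs (at most $d_{\cG}$ rounds overall) or when $\hst$ itself errs under $\Gstar$ (at most $\OPTH$ rounds), via the split $\indicator{z\ne y_t}\le\indicator{z\ne\tildeh^\star_{\Gstar}(x_t)}+\indicator{\tildeh^\star_{\Gstar}(x_t)\ne y_t}$. On the at most $N$ mismatched rounds I charge one mistake each, so $M^\star\le\OPTH+d_{\cG}+N$. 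Finally, enumerating $m\le d_{\cG}$ mistake slots, $n\le N$ mismatch slots, and their direction indices gives $\log|\experts|\le O\!\left(\log|\cG|+(d_{\cG}+N)(\log T+\log\Deltamax)\right)$; plugging both bounds into the voting guarantee yields the stated bound, the bare additive $d_{\cG}+N$ being dominated. The main difficulty throughout is the two-level guessing scheme and verifying that the mismatched rounds can be absorbed at an additive cost of $N$ without corrupting the SOA equivalence that controls the matched rounds.
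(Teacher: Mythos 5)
Your proposal is correct and follows essentially the same route as the paper: the biased-majority guarantee under the union graph (with the penalize-only-on-genuine-mistake claim re-verified for per-round $G_t\subseteq\Gunion$), a representative expert built by the two-level guessing scheme over mistake rounds and mismatched rounds with version-space equivalence to a counterfactual $\StrategicSOA(\cH,\Gstar)$ run on the relabeled sequence, the bound $\OPTexperts_{G_{1:T}}\le\OPTH+d_{\cG}+N$, and the expert count. Your only deviation is taking $i'_{1:n}$ to be the mismatched \emph{mistake} rounds rather than all mismatched rounds, which is harmless since the algorithm only consults these indices at mistake rounds.
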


\begin{proof}[Proof of \Cref{thm:graph-agnostic}]
    Similar to the proof of \Cref{thm:graph-realizable}, we use $\experts_G$ to denote the subset of experts constructed in \Cref{algo:graph-agnostic} for graph $G$, and use $\experts\triangleq\cup_{G\in\cG}\experts_G$ to denote the set of all experts. Since we have $G_t\subseteq \Gunion$ at all time steps, \Cref{lemma:biased-maj-strong} on the expert set $\experts$ gives us an upper bound on the number of mistakes made by \Cref{algo:graph-agnostic} as follows:
    \begin{align}
        \mistake(S,\cH,G_{1:T})\lesssim \Deltaunion\cdot\OPTexperts_{G_{1:T}}+\Deltaunion\cdot\log|\experts|.
        \label{eq:expert-advice}
    \end{align}
    We establish the following lemma, which is a strengthened version of \Cref{lemma:key} that accounts for the possibility of $G_t\neq G$.
    \begin{lemma}
    \label[lemma]{lemma:key-agnostic-graph}
        For any hypothesis $ h\in\cH$, any sequence of agents $S$, and any sequence of graphs $G_{1:T}$ where $\sum_{t=1}^T\indicator{N_{G_t}^+[x_t]\neq N_{G}^+[x_t]}\le N$, 
    there exists an expert $\mathfrak{e}_h\in\experts_G$ (which are constructed in \Cref{algo:graph-agnostic}) 
    such that
    \begin{align*}
        \mistake_{\eexpert_h}(S,G_{1:T})\le \mistake_h(S,G)+N+\sdim(\cH,G).
    \end{align*}
    \end{lemma}
    Applying the above lemma to $\experts_{\Gstar}$ and using the upper bound $\OPTG=\indicator{G_t\neq \Gstar}\le N$, we conclude that for the optimal hypothesis $\hst$, there must exist expert $\eexpert_{\hst,\Gstar}\in\experts_{\Gstar}\subseteq\experts$, such that the number of mistakes $\eexpert_{\hst,\Gstar}$ makes under $G_{1:T}$ is at most $\sdim(\cH,\Gstar)+N\le d_{\cG}+N$ more than that made by $\hst$ under $\Gstar$. This implies
    \begin{align}
        {\OPTexperts}_{G_{1:T}}\le\OPTH+d_{\cG}+N.
        \label{eq:part-1}
    \end{align}
    As for the number of experts, we have
    \begin{align}
        |\experts|&= \sum_{G\in\cG}\left(\sum_{m\le \sdim(\cH,G)}\binom{T}{m}\cdot (\Deltain)^m\right)\left(
        \sum_{n\le N}\binom{T}{n}\cdot (\Deltaunionin)^n\right)
        \nonumber\\
        &\le |\cG| \left(\sum_{m\le d_{\cG}}\binom{T}{m}\cdot (\Deltamax)^m\right)
        \left(\sum_{n\le N}\binom{T}{n}\cdot (\Deltaunionin)^n\right)
        \tag{$\forall G\in\cG,\ \sdim(\cH,G)\le d_{\cG},\Deltain\le\Deltamax$}\nonumber\\
        &\lesssim |\cG|\cdot (T\cdot\Deltamax)^{d_{\cG}+1}\cdot
        (T\cdot\Deltaunionin)^{N+1}.
        \label{eq:part-2}
    \end{align}
    Finally, plugging both \Cref{eq:part-1,eq:part-2} into the bound \Cref{eq:expert-advice} gives us
    \begin{align*}
        \regret(S,\cH,G)\le O\left(
            \Deltaunion\cdot\left(\OPTH+(d_{\cG}+N)\cdot(\log T+\log\Deltamax)+\log|\cG|\right)
        \right),
    \end{align*}
    as desired. The proof is thus complete.
\end{proof}

\begin{proof}[Proof of \Cref{lemma:key-agnostic-graph}]
    We use a similar approach to that of proving \Cref{lemma:key}. We define the sequence $S^{(h)}$ and the indices $i_{1:m}, r_{1:m}$ in the same way as \Cref{lemma:key}. In addition, we set $i'_{1:n}$ to be the time steps where $G_t\neq G$, which clearly satisfies $n\le N$. At time step $t=i'_s$ ($s\in[k]$), since $v_t$ is the best response according to graph $G_t\subseteq\Gunion$, there exists an index $r'_s\in\{0,\cdots,\Deltaunionin\}$ such that $x_t$ is the $r_s'$-th in-neighbor under graph $\Gunion$. We will show that expert $\eexpert_h=\expert(i_{1:m},r_{1:m},i'_{1:n},r_{1:n}';G)$ is the one we want.

    Again, we prove this by showing the equivalence of the following two $\StrategicSOA$ instances:
    \squishlist
    \item $\SSOAexpert$ is the algorithm instance simulated by expert $\eexpert_h$;
    \item $\SSOAh$ is the algorithm instance running on sequence $S^{(h)}$.
    \squishend
    Repeating the induction approach in \Cref{lemma:key}, it suffices to show that the estimate $\hat{x}_t$ correctly matches the true $x_t$ during the rounds $i_{1:m}$. When $t\notin\{i'_1,\cdots,i'_n\}$, this is guaranteed by the way indices $r_{1:m}$ are constructed. When there exists $s\in[n]$ such that $t=i'_s$, this also holds based on the definition of $r'_{1:n}$. Therefore, the above two $\StrategicSOA$ instances are equivalent.

    Finally, we have
    \begin{align*}
        &\quad\mistake_{\mathfrak{e}_h}(S,G_{1:T})-\mistake_{h}(S,G)\\
        &=
        \sum_{t=1}^T \indicator{\hat{h}_t(\br_{\hat{h}_t,G_t}(x_t))\neq y_t}
        -\sum_{t=1}^T \indicator{h(\br_{h,G}(x_t))\neq y_t}\\
        &\le \sum_{t=1}^T \indicator{N_{G_t}^+[x_t]\neq N_{G}^+[x_t]}+\sum_{t=1}^T \indicator{\hat{h}_t(\br_{\hat{h}_t,G}(x_t))\neq h(\br_{h,G}(x_t))}
        \tag{break $[T]$ into two parts on whether ${N_{G_t}^+[x_t]\neq N_{G}^+[x_t]}$}
        \\
        &\le N+\sdim(\cH,G).
        \tag{Combining $\sum_{t} \indicator{G_t\neq G}\le N$ and the bound from \Cref{lemma:key}}
    \end{align*}
    We have thus established the lemma.
\end{proof}

\paragraph{Doubling trick to removing the assumption of knowing $\OPTG$}
We end this section with an algorithm for agnostic graph classes that does not require any prior knowledge on $\OPTG$. We present \Cref{algo:parameter-free} which uses \Cref{algo:graph-agnostic} as a subrountine and performs the doubling technique on the parameter $N$. 

This algorithm is based on the important observation that, if $N\ge \OPTH+\OPTG+d_{\cG}+\log|\cG|$ (in particular this implies $N\ge\OPTG$), then \Cref{thm:graph-agnostic} guarantees that running \Cref{algo:graph-agnostic} with parameter $N$ achieves the mistake upper bound of $O\left(\Deltaunion\cdot(2N)\cdot(\log T+\log\Deltaunionin)\right)$,
where the leading constant of the mistake bound can be estimated to be $\le4$ by a more careful analysis. Therefore, if we denote define the problem-dependent parameter $C$ to be 
\begin{align*}
    C\triangleq 8\Deltaunion\cdot(\log T+\log\Deltaunionin),
\end{align*}
then as long as $N$ reaches value at least \begin{align}
    N^\star\triangleq\OPTH+\OPTG+d_{\cG}+\log|\cG|,
\end{align} 
running \Cref{algo:graph-agnostic} makes no more than $C\cdot N$ mistakes. Therefore, the learner just needs to estimate $N$ through the doubling trick and terminate when the observed mistake does not exceed $C$ times the estimate of $N$.

\begin{algorithm}
    \SetKwInOut{Init}{Initialization}
    \caption{Online Strategic Classification For Agnostic Graph Class}
    \label[algorithm]{algo:parameter-free}
    \KwIn{Hypothesis class $\cH$, graph class $\cG$.}
    Let $C\gets 8\Deltaunion\cdot(\log T+\log\Deltaunionin)$\;
    \Init{index of the current epoch $k\gets1$}
    \While{total number of steps $<T$}{
        \tcc{Epoch $k$}
        $\Alg_k\gets$ a new instance of \Cref{algo:graph-agnostic} with parameters $(\cH,\cG,N_k=2^k)$\;
        \While{
        $\Alg_k$ has made no more than $C\cdot 2^k$ mistakes
        }{
        Run $\Alg_k$ for another round
        }
        \tcc{Exiting the while loop indicates that $N_k<N^\star$, should double estimate and enter the next epoch}
        $k\gets k+1$
    }
\end{algorithm}

\begin{proposition}
\label[proposition]{prop:param-free}
    For any graph class $\cG$, any hypothesis class $\cH$, and any sequences $S=(x_t,y_t)_{t\in[T]},(G_{t})_{t\in[T]}$,  \Cref{algo:parameter-free} enjoys regret bound
    \begin{align*}
        \regret(S,\cH,\cG)\le O\left(\Deltaunion\cdot(\OPTH+\OPTG+d_{\cG}+\log|\cG|)\cdot\log(T\Deltaunionin)\right).
    \end{align*}
\end{proposition}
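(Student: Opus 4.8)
The plan is to treat \Cref{algo:parameter-free} as a textbook doubling argument over the unknown quantity $N^\star \triangleq \OPTH+\OPTG+d_{\cG}+\log|\cG|$, leveraging the key observation recorded just before the algorithm: whenever an epoch's guess $N_k = 2^k$ satisfies $N_k \geq N^\star$, the instance $\Alg_k$ of \Cref{algo:graph-agnostic} run with parameter $N_k$ makes at most $C\cdot N_k$ mistakes over all remaining rounds, where $C = 8\Deltaunion(\log T+\log\Deltaunionin)$. This is exactly the refined-constant form of \Cref{thm:graph-agnostic}: when $N\ge N^\star$ we have $\OPTH\le N$, $d_{\cG}\le N$, and $\log|\cG|\le N$, collapsing the bound to $4\cdot\Deltaunion\cdot(2N)\cdot(\log T+\log\Deltaunionin)=C\cdot N$. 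Let $k^\star \triangleq \lceil \log_2 N^\star \rceil$ be the first epoch index with $2^{k^\star}\ge N^\star$.

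First I would show that the algorithm never advances beyond epoch $k^\star$. For every $k\ge k^\star$ we have $N_k\ge N^\star$, so $N_k$ is a valid upper bound on $\OPTG$ even when restricted to the subsequence observed inside that epoch (whose benchmarks $\OPTH,\OPTG$ are no larger than the global ones). Invoking \Cref{thm:graph-agnostic} with the refined constant, the total number of mistakes made by $\Alg_{k^\star}$ over the whole horizon is at most $C\cdot N_{k^\star}=C\cdot 2^{k^\star}$. Consequently the inner while-loop test "$\Alg_{k^\star}$ has made no more than $C\cdot 2^{k^\star}$ mistakes" never fails, the epoch runs until the global step counter reaches $T$, and the algorithm terminates within epoch $k^\star$.

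Next I would sum the mistakes across epochs. Each epoch $k<k^\star$ is exited precisely because $\Alg_k$ crossed its threshold, so it contributes at most $C\cdot 2^k + 1$ mistakes (the $+1$ accounting for the single mistake that triggers the exit), while the final epoch $k^\star$ contributes at most $C\cdot 2^{k^\star}$ by the argument above. Summing the geometric series gives
\[
\mistake(S,\cH,\cG)\le \sum_{k=1}^{k^\star}\bigl(C\cdot 2^k + 1\bigr)\le 2C\cdot 2^{k^\star} + k^\star .
\]
Since $2^{k^\star}<2N^\star$ and $k^\star=O(\log N^\star)$ is dominated by the first term, this is $O(C\cdot N^\star)$. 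Substituting $C=8\Deltaunion(\log T+\log\Deltaunionin)$ and $N^\star=\OPTH+\OPTG+d_{\cG}+\log|\cG|$, and using $\log T+\log\Deltaunionin=\log(T\Deltaunionin)$, yields $O\!\left(\Deltaunion\cdot(\OPTH+\OPTG+d_{\cG}+\log|\cG|)\cdot\log(T\Deltaunionin)\right)$; because the regret is at most the number of mistakes, the same bound holds for $\regret(S,\cH,\cG)$.

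The hard part will be rigorously justifying the refined leading constant in the per-epoch mistake bound and cleanly handling the interaction between the adaptive adversary and the epoch restarts. Concretely, I must confirm that \Cref{thm:graph-agnostic} applies verbatim to the history-dependent subsequence seen within each epoch; this follows because that theorem is stated for arbitrary adaptive sequences and the relevant benchmarks $\OPTH,\OPTG$ for any subsequence never exceed their global counterparts, so $N_k\ge N^\star$ remains a valid bound throughout the epoch. Once that point is settled, the remaining steps are the routine geometric bookkeeping above.
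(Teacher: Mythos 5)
Your proposal is correct and follows essentially the same route as the paper: identify the threshold $N^\star=\OPTH+\OPTG+d_{\cG}+\log|\cG|$, observe that any epoch with $N_k\ge N^\star$ never trips the mistake threshold $C\cdot 2^k$ and hence never terminates, conclude there are at most $k^\star=O(\log N^\star)$ epochs, and sum the geometric series to get $O(C\cdot N^\star)$. Your extra care about the $+1$ mistake that triggers an epoch exit and about the subsequence benchmarks under the adaptive adversary only tightens bookkeeping the paper leaves implicit and does not change the argument.
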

\begin{proof}[Proof of \Cref{prop:param-free}]
    Since the above argument has shown that an epoch with $N_k\ge \Nstar$ will never terminate, the algorithm will have at most $k^\star=\log\Nstar=\log(\OPTH+\OPTG+d_{\cG}+\log|\cG|)$ epochs. Moreover, since the number of mistakes made in any epoch $k$ cannot exceed $C\cdot2^k$, the total number of mistakes (thus the regret) is upper bounded by
    \begin{align*}
        C\cdot\sum_{k=1}^{k^\star} 2^k
        \le C\cdot 2^{k^\star+1}\le O(C\Nstar)
        \le O\left(\Deltaunion\cdot(\OPTH+\OPTG+d_{\cG}+\log|\cG|)\cdot\log(T\Deltaunionin)\right).
    \end{align*}
    This completes the proof.
\end{proof}

\section{Randomization Gap}
\label{app:randomness}
In this section, we present a an instance $(\cH,G)$ in which there exists an exponential gap between the optimal mistake bound of deterministic and randomized algorithms.

Let $G$ be the star graph with center $x_0$ and $\Delta$ leaves $\{x_1,\cdots,x_\Delta\}$.
$\cH=\{h_1,\cdots,h_\Delta\}$ where $h_i(x)=\indicator{x=x_i}$.
An adaptive adversary picks a realizable sequence $S=(x_t,y_t)_{t\in[T]}$ where each agent $(x_t,y_t)$ satisfies $y_t=\tildeh^\star(x_t)$ for a fixed $\hst=h_{\ist}\in\cH$. This means all the realizable choices for $(x_t,y_t)$ are restricted to the subset $\{(x_{\ist},+1),(x_0,+1)\}\cup\{(x_i,-1)\mid i\neq\ist\}$.

\paragraph{Deterministic algorithms}
If the learner is restricted to using deterministic algorithms, then \citet[Theorem 4.6]{ahmadi2023fundamental} showed that the optimal mistake is lower bounded by $\Delta-1$. This implies $\optmistake^{\determ}(\cH,G)\ge\Delta-1$.

\paragraph{Randomized algorithms}
If the learner is allowed to use randomness, we will construct an randomized algorithm $\Alg$ that enjoys an expected mistake bound of $\log\Delta$.
This would imply $\optmistake^{\random}(\cH,G)\le\log\Delta$, which in turn witnesses
a super-constant gap between the minmax optimal mistake bounds for deterministic and randomized learners, i.e., 
\[
\frac{\optmistake^{\determ}(\cH,G)}{\optmistake^{\random}(\cH,G)}\ge \omega(1).
\]

The algorithm $\Alg$ maintains a version space  $\cH_t$ that consists of all the hypotheses consistent with the history up to time $t$. This version space is initialized to be $\cH_1\gets\cH$. At every round, the learner commits to a distribution over classifiers that randomly pick a classifier from the version space, i.e., $h_t\sim\unif(\cH_t)$.
Let $M(k)$ be the expected of mistakes by $\Alg$ in the future starting from time step $t$ where $|\cH_t|=k$.
Consider the following cases:
\begin{itemize}
    \item If the adversary chooses $(x_t,y_t)=(x_{\ist},+1)$, then the learner makes a mistake with probability $1-\frac{1}{k}$, but gets to know $\ist$ afterwards and will make no more mistakes in the future. In this case, we have $M(k)=1-\frac{1}{k}$.
    \item If the adversary chooses $(x_t,y_t)=(x_0,+1)$, then the learner never makes mistakes, but also gains no information. The version space remains the same ($\cH_{t+1}=\cH_t$). The expected mistakes in the future is still $M(k)$.
    \item If the adversary chooses $(x_t,y_t)=(x_i,-1)$ for some $i\neq\ist$, then the learner makes a mistake only when $h_t=h_i$, which happens with probability $\frac{1}{k}$. On the other hand, no matter which $h_t$ gets realized, the learner can always observe $(v_t,y_t)=(x_i,-1)$ and update the version space to be $\cH_{t+1}=\cH_t\setminus\{h_i\}$. By symmetry of the star graph, the expected mistakes for future rounds is $M(k-1)$. We thus have $M(k)=\frac{1}{k}+M(k-1)$.
\end{itemize}
Overall, we have \[
M(k)\le \max\left\{1-\frac{1}{k},\ M(k),\ \frac{1}{k}+M(k-1)\right\}.
\]
Solving the above recurrence relation gives us $M(k)\le \log k$, which implies $\optmistake^{\random}(\cH,G)\le \mistake_{\Alg}(\cH,G)=M(\Delta)\le\log\Delta$.

\end{document}